\crefname{figure}{Fig.}{Figs.}
\crefname{table}{Tab.}{Tabs.}
\crefname{equation}{Eq.}{Eqs.}
\setlist[itemize]{noitemsep, topsep=0pt}
\newtheorem{theorem}{\textbf{Theorem}}
\newtheorem{lemma}{\textbf{Lemma}}
\pgfplotsset{compat=1.18}
\newcommand{\R}{\mathbb{R}}
\newcommand{\E}{\mathbb{E}}
\newcommand{\I}{\mathbb{I}}
\newcommand{\bb}{\boldsymbol{b}}
\newcommand{\bch}{\boldsymbol{\chi}}
\newcommand{\bs}{\boldsymbol{s}}
\newcommand{\bg}{\boldsymbol{g}}
\newcommand{\bX}{\boldsymbol{X}}
\newcommand{\bY}{\boldsymbol{Y}}
\newcommand{\bx}{\boldsymbol{x}}
\newcommand{\by}{\boldsymbol{y}}
\newcommand{\cA}{\mathcal{A}}
\newcommand{\cB}{\mathcal{B}}
\newcommand{\cD}{\mathcal{D}}
\newcommand{\cE}{\mathcal{E}}
\newcommand{\cG}{\mathcal{G}}
\newcommand{\cN}{\mathcal{N}}
\newcommand{\cU}{\mathcal{U}}
\newcommand{\cV}{\mathcal{V}}
\newcommand{\cQ}{{\mathcal{Q}}}
\newcommand{\bmu}{\boldsymbol{\mu}}
\newcommand{\bbR}{\mathbb{R}}
\DeclareMathOperator*{\argmax}{arg\,max}
\newcommand{\tDelta}{\tilde{\Delta}}
\begin{document}
\title{A Unified Online-Offline Framework for Co-Branding Campaign Recommendations}

\author{Xiangxiang Dai}
\orcid{https://orcid.org/0000-0003-0179-196X}
\affiliation{%
  \institution{The Chinese University of Hong Kong, Hong Kong SAR, China}
  \country{}
  }
\email{xxdai23@cse.cuhk.edu.hk}

\author{Xiaowei Sun}
\orcid{https://orcid.org/0009-0003-2084-0656}
\affiliation{%
   \institution{Fudan University,}
  \country{Shanghai, China}
}
\email{xwsun24@m.fudan.edu.cn}

\author{Jinhang Zuo}
\orcid{https://orcid.org/0000-0002-9557-3551}
\affiliation{%
  \institution{City University of Hong Kong,}
  \country{Hong Kong SAR, China}
}
\email{jinhang.zuo@cityu.edu.hk}

\author{Xutong Liu}
\orcid{https://orcid.org/0000-0002-8628-5873}
\authornote{Xutong Liu is the corresponding author.}
\affiliation{%
  \institution{Carnegie Mellon University,}
  \country{PA, USA}}
  \email{xutongl@andrew.cmu.edu}

  \author{John C.S. Lui}
  \orcid{https://orcid.org/0000-0001-7466-0384}
\affiliation{%
  \institution{The Chinese University of Hong Kong, Hong Kong SAR, China}
  \country{}}
  \email{cslui@cse.cuhk.edu.hk}


\begin{abstract}
Co-branding has become a vital strategy for businesses aiming to expand market reach within recommendation systems. However, identifying effective cross-industry partnerships remains challenging due to resource imbalances, uncertain brand willingness, and ever-changing market conditions. In this paper, we provide the first systematic study of this problem and propose a unified online-offline framework to enable co-branding recommendations. Our approach begins by constructing a bipartite graph linking ``initiating'' and ``target'' brands to quantify co-branding probabilities and assess market benefits. During the online learning phase, we dynamically update the graph in response to market feedback, while striking a balance between exploring new collaborations for long-term gains and exploiting established partnerships for immediate benefits. To address the high initial co-branding costs, our framework mitigates redundant exploration, thereby enhancing short-term performance while ensuring sustainable strategic growth. In the offline optimization phase, our framework consolidates the interests of multiple sub-brands under the same parent brand to maximize overall returns, avoid excessive investment in single sub-brands, and reduce unnecessary costs associated with over-prioritizing a single sub-brand.  We present a theoretical analysis of our approach, establishing a highly nontrivial sublinear regret bound for online learning in the complex co-branding problem, and enhancing the approximation guarantee for the NP-hard offline budget allocation optimization. Experiments on both synthetic and real-world co-branding datasets demonstrate the practical effectiveness of our framework, with at least 12\% improvement.
\end{abstract}

\begin{CCSXML}
<ccs2012>
   <concept>
       <concept_id>10002951.10003317.10003347.10003350</concept_id>
       <concept_desc>Information systems~Recommender systems</concept_desc>
       <concept_significance>500</concept_significance>
       </concept>
   <concept>
       <concept_id>10003752.10003809.10010047.10010048</concept_id>
       <concept_desc>Theory of computation~Online learning algorithms</concept_desc>
       <concept_significance>500</concept_significance>
       </concept>
   <concept>
       <concept_id>10003752.10003809.10003636</concept_id>
       <concept_desc>Theory of computation~Approximation algorithms analysis</concept_desc>
       <concept_significance>500</concept_significance>
       </concept>
 </ccs2012>
\end{CCSXML}

\ccsdesc[500]{Information systems~Recommender systems}
\ccsdesc[500]{Theory of computation~Online learning algorithms}
\ccsdesc[500]{Theory of computation~Approximation algorithms analysis}

\keywords{Co-branding recommendation, Online learning, Multi-armed bandit, Offline optimization, Approximation guarantee}

\maketitle

\newcommand\kddavailabilityurl{https://doi.org/10.5281/zenodo.15532548}

\ifdefempty{\kddavailabilityurl}{}{
\begingroup\small\noindent\raggedright\textbf{KDD Availability Link:}\\
The source code of this paper has been made publicly available at \url{\kddavailabilityurl}.
\endgroup
}

\section{Introduction}
Co-branding recommendation is a popular strategy that businesses and consumers alike embrace.  By leveraging collaborations between two different brands, this recommendation strategy involves co-creating new products or services to maximize user satisfaction and achieve joint market benefits \cite{gogri2022co}. 
For example, the launch of \textit{CASETiFY}’s collaborative line of electronic accessories in partnership with \textit{Netflix}'s television series ``Squid Game 2'' strongly resonates with fans of the series, showcasing the success of such brand collaborations\footnote{https://www.casetify.com/co-lab/squid-game-2024}. These collaborations not only provide brands with enhanced exposure and increased market gains, but also cater to consumer demand for novelty. According to SocialBeta's data  \cite{SocialBeta} covering the period from March 2022 to February  2023, approximately 15\% of the 4,428 recorded brand marketing cases involved co-branding. Further analysis reveals that this proportion never dropped below 10\% in any given month, with co-branding seeing notable spikes during key promotional periods like summer sales and the year-end holiday season (e.g., Christmas and New Year). In this context, co-branding has now become a trend.  Despite its potential, a key challenge remains: \textit{How can brands identify the most suitable partners from the set of target brands to ensure successful co-branding and maximize market returns?} This question remains systematically unresolved and cannot be overlooked. For instance, the collaboration between \textit{Heytea} and the \textit{Jingdezhen China Ceramic Museum} resulted in reduced profits,  highlighting the importance of selecting the right target brands \cite{turan2022deal, helmig2007explaining}.

However, this is a challenging task. \textit{The co-branding market is fraught with uncertainty.} Firstly,  overemphasizing certain popular co-branding collaborations can lead to a resource imbalance, i.e., ``Matthew Effect'' \cite{Gao2023AlleviatingME}, where smaller, long-tail brands are overlooked despite their potential to deliver greater market benefits. 
The willingness of target brands to participate in collaborations is also uncertain, influenced by multiple factors such as brand positioning and financial commitment from the initiating brand. Additionally, market changes make over-reliance on historical data risky. For instance, the continued use of certain types of co-branding may lead to consumer fatigue or shifting preferences, which may be overlooked by brands overly dependent on past performance. In such cases, collaborations with new brands can disrupt the status quo and introduce a sense of novelty \cite{bobadilla2013recommender,turan2022deal}.

In such a dynamic online recommendation market, static strategies are insufficient for determining appropriate co-branding partnerships. Instead, \textit{a more dynamic approach that continuously adapts to uncertainty is required}. This involves addressing the ``\textit{exploration-exploitation trade-off}'', where brands must balance the known willingness and market benefits of established partnerships while exploring potential collaborations with new, promising target brands that could deliver better market performance.
Recent studies have shown that multi-armed bandit (MAB) methods can effectively address this trade-off challenge in recommendation systems in a lightweight, online manner \cite{heyden2024budgeted,li2025towards,xia2024kdd,dai2024online}. \textit{From a decision-making perspective, co-branding can be framed as a recommendation problem, where the goal is to suggest partnerships that maximize long-term market impact.} However, unlike traditional recommendation tasks (e.g., recommending a movie to a user), the costs associated with co-branding decisions are significantly high \cite{yang2024conversational,dai2024conversational,liu2024combinatorial}. Implementing a co-branding partnership requires substantial upfront efforts, including extensive market research and creating branded materials, all of which incur significant costs. As a result, while long-term brand objectives may benefit from the online adjustment of co-branding strategies to maximize collaboration gains, exploration must be conducted with caution to mitigate risks—especially in the early stages—to prevent brands from losing confidence in the algorithm before its benefits materialize.

In addition to the inherent uncertainty surrounding the willingness or feasibility of co-branding and market gains, brands must also manage budget constraints when planning co-branding initiatives.  In practice, the parent brand that initiates co-branding campaigns often oversees ``\textit{multiple sub-brands}'' (e.g., the \textit{Coca-Cola} company owns sub-brands like \textit{Sprite} and \textit{Fanta}, or \textit{Marvel} has various superhero intellectual properties (IPs), such as Spider-Man and Iron Man). An illustration of this structure is shown in (a) part of \Cref{fig:model}, with further details provided in \Cref{sec:model}. Due to financial limitations, it may not be feasible for every sub-brand to engage in co-branding with target brands. Consequently, when planning and allocating budgets for a co-branding initiative, the parent brand must adopt a holistic view of the entire brand ecosystem, rather than focusing on a single sub-brand. Focusing solely on co-branding efforts for individual sub-brands could reduce overall collaboration benefits, potentially affecting the profitability of the parent brand as a whole.  In such cases, allocating excessive resources to a single sub-brand's co-branding efforts may lead to diminishing returns, whereas a more cost-effective sub-brand might generate higher overall value for the parent brand.
Therefore, \textit{it is crucial to assess and align the interests of all sub-brands within the parent brand's portfolio.}  Optimizing budget allocation for maximum collective benefit, rather than focusing on individual sub-brands, can help maximize overall market outcomes.

To the best of our knowledge, no existing work systematically addresses the complex problem of co-branding campaign recommendations. We present \textit{the first} systematic approach to model and tackle this challenge. Motivated by the aforementioned practical needs,  we propose a unified online-offline framework, beginning with the construction of a co-branding bipartite graph model. This model captures the structural relationships between the initiating parent brand—along with its multiple sub-brands—and target brands, while incorporating key factors such as the willingness or feasibility of co-branding and market gains to accurately represent brand interactions.
Our framework enables two key processes: online graph learning and offline budget optimization, leveraging their synergy to continuously enhance co-branding strategies.

\textbf{Online Phase.} The model dynamically updates itself based on online market feedback rather than remaining static. 
 Extending beyond deterministic arm selection in traditional MAB methods,  our approach explicitly addresses uncertainties in co-branding, including budget-dependent co-branding willingness and market gains that are revealed only upon successful co-branding.  Given the high upfront investment in co-branding, we also emphasize short-term performance alongside long-term
performance by strategically minimizing redundant exploration. Theoretically, through a highly non-trivial analysis, we merge the error terms introduced by these new uncertainties while maintaining a near-optimal regret bound.
 
\textbf{Offline Phase.} We allocate budgets based on the insights learned from online feedback, optimizing future collaboration outcomes from the parent brand's perspective. This approach avoids the pitfalls of overemphasizing individual sub-brands and mitigates unnecessary premium costs. Given the high expenses of co-branding campaigns—e.g., joint marketing, material production, and negotiation efforts—we enhance the approximation ratio by introducing a partial enumeration technique. This enables more flexible allocations for the NP-hard budget allocation problem while ensuring computational traceability compared to previous works.

Our contributions are listed as follows.

$\bullet$ \textbf{Model Formulation:} We propose a novel co-branding bipartite graph model that captures critical features based on the bidirectional relationships between ``initiating'' and ``target'' brands. This model not only incorporates the uncertainty of successful collaborations, but also incorporates market gains, enabling a comprehensive representation of the potential value created by different co-branding partnerships.

$\bullet$ \textbf{Algorithm Design:} To balance exploration and exploitation in an online market, we extend the MAB framework and adapt it to the co-branding scenario during the online learning phase, which emphasizes both long-term gains and short-term performance. Concurrently, in the offline optimization phase, we allocate budgets at the parent brand level, ensuring equitable investment among sub-brands and preventing over-allocation to a single sub-brand.

$\bullet$ \textbf{Theoretical Analysis:} We establish theoretical analysis and interoperability for our framework. For the online learning phase, we provide a near-optimal sub-linear regret analysis under complex uncertainty with refined exploration. In the offline optimization phase, we guarantee a $1-1/e$ approximation for the NP-hard optimal solutions under budget considerations.

$\bullet$ \textbf{Experimental Evaluation:} We perform extensive evaluations on both synthetic and real-world datasets, including curated datasets that we will publicly release. The results show that our approach achieves 39\% improvements on average over traditional methods, delivering superior short-term and long-term returns.

\section{Model and Problem Formulation}\label{sec:model}

This section presents the co-branding problem formulation, comprising both the online graph learning scenario and the offline budget allocation problem\footnote{Further discussions and extensions about the model can be found in Appendix \ref{app:model_dis}.}. For $\forall z \in\mathbb{N}_0$, let $[z]$ denote the set $\{1, 2, \dots, z\}$, with $|\cdot|$ denoting the size of a set. 

\subsection{Co-Branding Bipartite Graph Model}
\label{subsec:bipartite_graph_model}
We adopt a bipartite graph \( \cG = (\mathcal{U}, \mathcal{V}, \mathcal{E}) \) as the foundational framework to model co-branding opportunities between a parent brand and potential partner brands. This choice is motivated by the natural division in co-branding scenarios: sub-brands initiating co-branding and potential partner brands that could participate in these initiatives.
As shown in the (a) part of \cref{fig:model}, the set \( \mathcal{U} \) denotes co-branding initiators, representing a parent brand system with multiple sub-brands, where \( |\cU| = U \) is the number of sub-brands involved in co-branding. These sub-brands maintain diverse market positions and brand images within the brand architecture, enabling various co-branding initiatives. For companies without sub-brands, \( U \) reduces to 1, representing a single-brand entity. 
The set \( \mathcal{V} \)  represents target co-branding partner brands, where \(|\cV| = V \) denotes the number of available potential partner brands. These target brands allow individual assessment of each co-branding opportunity's feasibility and success probability.

\begin{figure}[!t]
    \centering
    \includegraphics[width=0.48\textwidth]{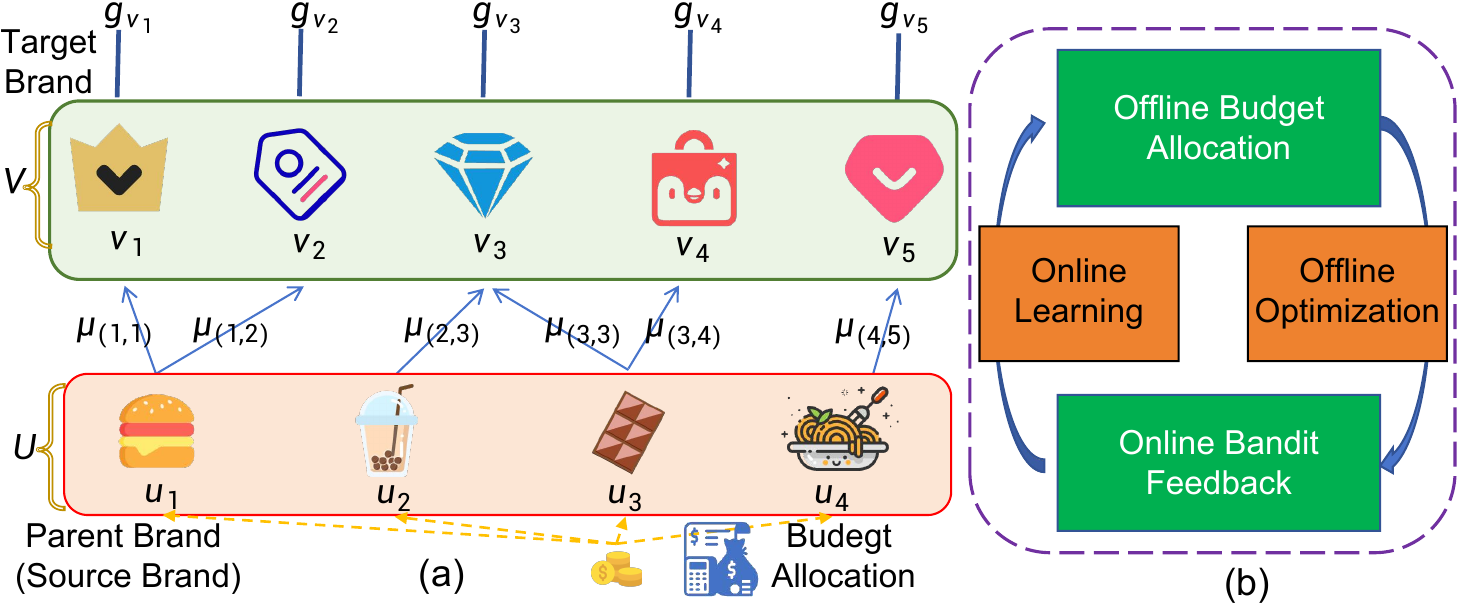}

    \caption{Co-Branding Bipartite Graph with Online-Offline Iterative Framework (with $U=4, V=5$ in this example).}
    \label{fig:model}
\end{figure}

Each edge \( e := (u, v) \in \mathcal{E} \) signifies a co-branding pair between sub-brand \( u \in \mathcal{U} \) and target partner brand \( v \in \mathcal{V} \). The weight vector \( \bmu = \{\mu_e\}_{e \in \mathcal{E}} \)  represents the probability of successful co-branding for each edge. These probabilities are influenced by multiple factors in the complex business environment, e.g., market positioning alignment, brand image compatibility, and profit-sharing arrangements. These probabilities are typically unknown a priori and require continuous online learning to update. Moreover, the probability positively correlates with the parent brand's financial budget $\bb$ (introduced in detail shortly), i.e., higher investment generally increases the likelihood of successful co-branding. Since any \(u \) belongs to the same parent company $\mathcal{U}$, there is no bidding agreement between sub-brand \( u \) and any target brand \( v \in \cV \), meaning \( v \) is not restricted to co-branding with only one specific \( u \) from \( \mathcal{U} \).
The graph \( \cG \) also incorporates the brand market gain vector \( \bg = \{g_v\}_{v \in \cV} \), which represents the revenues of selling products or services from a target brand $v$ to the entire brand company $\cU$ upon the launch of a co-branding initiative. Note that the market gain is primarily driven by consumer preferences rather than budget allocation, as a complex market environment does not necessarily imply a direct correlation between higher investment and higher returns.   Each \( g_v \in \bg \) denotes the consumer market gain from target brand $v$ to the whole parent brand \( \cU \), which is initially unknown.

\subsection{Online Feedback Mechanism}
\label{subsec:online_graph_learning}

While brands may leverage historical collaboration data to construct a static bipartite graph $\cG$, this approach is ill-suited to the ever-changing nature of market conditions. To address the inherent dynamics and uncertainties in the probabilities of successful brand partnerships and market gains, we adopt an online learning framework. This framework dynamically updates the bipartite graph \( \cG = (\mathcal{U}, \mathcal{V}, \mathcal{E}) \) based on sequentially observed outcomes.

The online learning process unfolds over \( T \) rounds, each corresponding to a co-branding season. Initially, the structure of \( \mathcal{G} \) is unclear and incomplete. During each co-branding season \( t \) (i.e., round), a budget allocation \( \boldsymbol{b}_t = (b_{t,1}, \ldots, b_{t,U}) \)  is determined based on observations from previous rounds (the budget allocation model will be introduced later).  Under this allocation, co-branding initiators \( \mathcal{U} \) engage in co-branding negotiations with target brands \( \mathcal{V} \). The resulting co-branding pairings between sets \( \mathcal{U} \) and \( \mathcal{V} \) are denoted as action \( S_t^{\bb_t} \subseteq \mathcal{E} \) for the current co-branding season. The parent company can determine $S_t$ under $\bb_t$ using various business strategies, such as prioritizing sub-brands with higher cooperation potential, distributing the budget progressively, or adopting a hybrid approach, though these heuristics are beyond our focus.
For brevity,  this is abbreviated as \( S_t \) when no ambiguity arises.

Upon executing actions on co-branding pairs, the decision-maker observes a feedback vector $\boldsymbol{X}_{t,S_t} = (X_{t,1}, \ldots, X_{t,|S_t|}) \in [0,1]^{|S_t|}$, where each \( X_{t,e,b_u} \) reflects the propensity for co-branding success for edge \( e = (u, v) \) under budget allocation $b_u$.  
Specifically, if brand \( v \) accepts a co-branding invitation from brand \( u \),  \( X_{t,e,b_u} =1 \), signaling a successful outcome. Conversely, \( X_{t,e,b_u} =0\) represents a failed invitation. 
The expectation of \( X_{t,e} \) is denoted as \( \mu_{e} \), representing the probability of successful co-branding between \( u \) and \( v \). 
 Leveraging this feedback, the decision-maker can update its estimates of the cooperation probabilities \( \bmu \), enabling it to prioritize promising brand pairings while avoiding those with uncertain prospects.
Furthermore, for each successful co-branding pair \(e= (u, v) \), where $X_{t,e}=1$, the market gain  \( Y_{t,v} \) from the target brand $v$ is observed  under the budget allocation $\bb$ for the parent brand $\cU$ in season \( t \). The expectation of \( Y_{t,v} \), denoted as  \( g_v \), represents the brand market gain, quantifying the economic benefits of the collaboration. Notably, this feedback is available only for successful co-branding actions.
We denote the vector of observed market gain feedback at season $t$ as $\boldsymbol{Y}_{t,\cV}$. By analyzing these feedback data, the decision-maker acquires valuable insights into the tangible economic benefits of partnerships, enabling more effective and informed decisions for future co-branding strategies.

As a result, when action \( S_t \) is executed under the given budget allocation, the decision-maker observes a reward  \( R_{\cG}(\bb_t) \), which is influenced by both the co-branding probabilities and the market gains under the graph $\cG$:
\begin{equation}\label{eq:reward}
    R_{\cG}(\bb_t)=\sum_{v \in \cV}\I\{\exists\, e=(u,v) \in S_t \text{ s.t. } X_{t, e,b_u}=1\}Y_{t,v}\,,
\end{equation}
where \( \mathbb{I} \{\cdot\} \) is an indicator function that evaluates to 1 if one successful co-branding action \( X_{t,e} = 1 \) exists, and 0 otherwise. For the online phase, the decision-maker needs to accurately and efficiently estimate and learn the structure of the graph $\cG$ to maximize the cumulative reward over time.

\subsection{Offline Strategic Budget Allocation}
\label{subsec:offline_budget_allocation}
As described earlier, the financial investments of the parent brand play a pivotal role in co-branding negotiations and planning for the upcoming season. Specifically, based on the online-learned market graph $\cG$, which includes the co-branding probabilities \( \bmu \) of co-branding and the corresponding market gains \( \bg \), 
  the parent brand must allocate a total budget \( B \in  \mathbb{N}
 \) (typically a large integer, e.g., in millions) across its sub-brands. The allocation is represented as \( \bb = (b_1, \ldots, b_U) \in \mathbb{N}_0^U \), where \( b_u \) denotes the budget assigned to sub-brand \( u \), subject to the constraint $\sum_{u=1}^{U} b_u \leq B$.
Subbrands \( u \in \mathcal{U} \) that do not participate in co-branding (e.g. due to poor past performance) are assigned a zero budget, i.e., \( b_u = 0 \). Furthermore, each subbrand \( u \) is assigned a predetermined budget cap \( c_u \), set by the parent brand, ensuring \( b_u \leq c_u \) for all \( u \in \mathcal{U} \). This ensures a controlled and strategic allocation of financial resources while accounting for the individual potential of each sub-brand.

Based on \cref{eq:reward} and the definitions of \( \bmu \) and \( \bg \), the expected reward  $r_{\cG}(\bb)$  for season \(t\), derived from $R_{\cG}(\bb)$ with action $S$  on $\bX_{t,S}$ and $\bY_{t,\cV}$ is expressed as:
\begin{equation}\label{eq:ex_reward}
  r_{\cG}(\bb) = \sum_{v\in \cV} g_v \left(1 - \prod_{e=(u,v) \in S} (1 - \mu_{e,b_u})\right).
\end{equation}
The offline budget allocation problem involves distributing the total budget \( B \) among \( U \) sub-brands to maximize the reward $r_{\cG}(\bb)$. To simplify marketing budget decisions, brand managers often prefer predefined strategies over continuous, fine-tuned adjustments \cite{low1998brand,pun2015note}. Accordingly, we define the set of tentative spending plans for each sub-brand \( u \) as \( \mathcal{N}_u \subseteq  \mathbb{N}_0\), where $s_u \in \mathcal{N}_u$ represents a specific tentative spending plan, still constrained by $s_u \leq c_u$. For instance, a sub-brand may offer three tiers of budget plans, such as low, medium, and high, to provide flexibility while remaining within the predefined cap.
The optimization problem is  formalized as:
\begin{equation}
\label{eq:opt_budget}
\begin{aligned}
\max_{\bb} \quad & r_{\cG}(\bb) = \mathbb{E}\left[  R_{\cG}(\bb_t) \right] \\
\text{s.t.} \quad & \bb \in \mathbb{N}_0^U, \sum_{u=1}^{U} b_u \leq B, \\
& b_u \leq c_u, \, b_u \in \cN_u, \forall u \in \cU .
\end{aligned}
\end{equation}
\Cref{eq:opt_budget} aims to optimally allocate resources among sub-brands for maximizing the total revenue, prioritizing those with higher potential returns while adhering to budget constraints. 
However, the expected reward function \( r_{\mathcal{G}}(\bb) \) is highly non-linear, rendering the problem NP-hard \cite{hochba1997approximation,wan2023bandit,dai2025variance} and extending beyond the standard knapsack problem framework \cite{agrawal2018proportional}. As solving it to optimality is computationally infeasible, we pursue an offline \textit{\(\alpha\)-approximation solution}, where \(\alpha \in (0, 1]\) denotes the approximation ratio. Specifically, the solution \( \bb \) satisfies:
$ r_{\cG}(\bb) \geq \alpha \cdot r_{\cG}(\bb^*),$
where \( \bb^* \) denotes the computationally intractable optimal budget allocation.  This approach provides a practical and efficient method to approximate the optimal solution while maintaining acceptable accuracy. Our goal is to maximize \( \alpha \), achieving a solution as close to optimal as feasible within computational constraints.

\subsection{Unified  Problem Formulation}
\label{subsec:alpha_regret}
In practical applications, complete information about the underlying graph \(\mathcal{G}\) is rarely available. The online learning problem addresses this challenge by iteratively learning the underlying graph parameters through interactions with the market environment. This decision-making process involves a critical trade-off between ``exploitation'' (leveraging historical empirical probabilities of successful co-branding and market gains) and ``exploration'' (identifying potentially better but yet undiscovered co-branding opportunities and market gains).
In the offline phase, at the beginning of each season \( t \), the decision-maker must determine the budget allocation \(\bb_t\) based on the online-learned graph $\cG$. As shown in the (b) part of \cref{fig:model}, these two components mutually reinforce each other, fostering continuous improvement in co-branding strategies.

The performance of an algorithm \( A \), which operates over both offline and online phases, is evaluated using the concept of ``\(\alpha\)-approximate regret'' \cite{li2016contextual,chen2016combinatorial,liu2025offline}. Over \( T \) rounds, the cumulative regret is defined as:
\begin{equation}
\label{eq:alpha_beta_regret}
Reg(T) = \alpha T \cdot r_{\mathcal{G}}(\bb^*) - \mathbb{E}\left[ \sum_{t=1}^T r_{\mathcal{G}}(\bb_t^A) \right],
\end{equation}
where \( \bb_t^A \) represents the budget allocation chosen by algorithm \( A \) in season \( t \). The goal is to design an algorithm \( A \) that minimizes \( Reg(T) \), ensuring that the cumulative rewards closely approximate those of the optimal allocation over \( T \) rounds.
Minimizing \( Reg(T) \) presents significant challenges. First, uncertainty in partnerships arises as leading brands often dominate resource allocation, overshadowing long-tail brands with untapped potential. Similarly, dynamic market conditions make over-reliance on historical data risky. These uncertainties jointly highlight the need for a careful balance between exploration and exploitation.  Beyond long-term performance,  early-stage failures pose a risk to stakeholder trust in online learning methods, requiring a more careful exploration strategy. Lastly, parent brands managing multiple sub-brands face intricate resource distribution challenges, where the objective shifts from optimizing individual sub-brand performance to maximizing collective benefits across the entire brand portfolio. 

\section{Algorithmic Design}\label{sec:algorithm}

As shown in Fig. \ref{fig:alg}, this section outlines our algorithm design, which seamlessly integrates both online and offline processes.

\begin{figure}[t]
    \centering
    \includegraphics[width=0.44\textwidth]{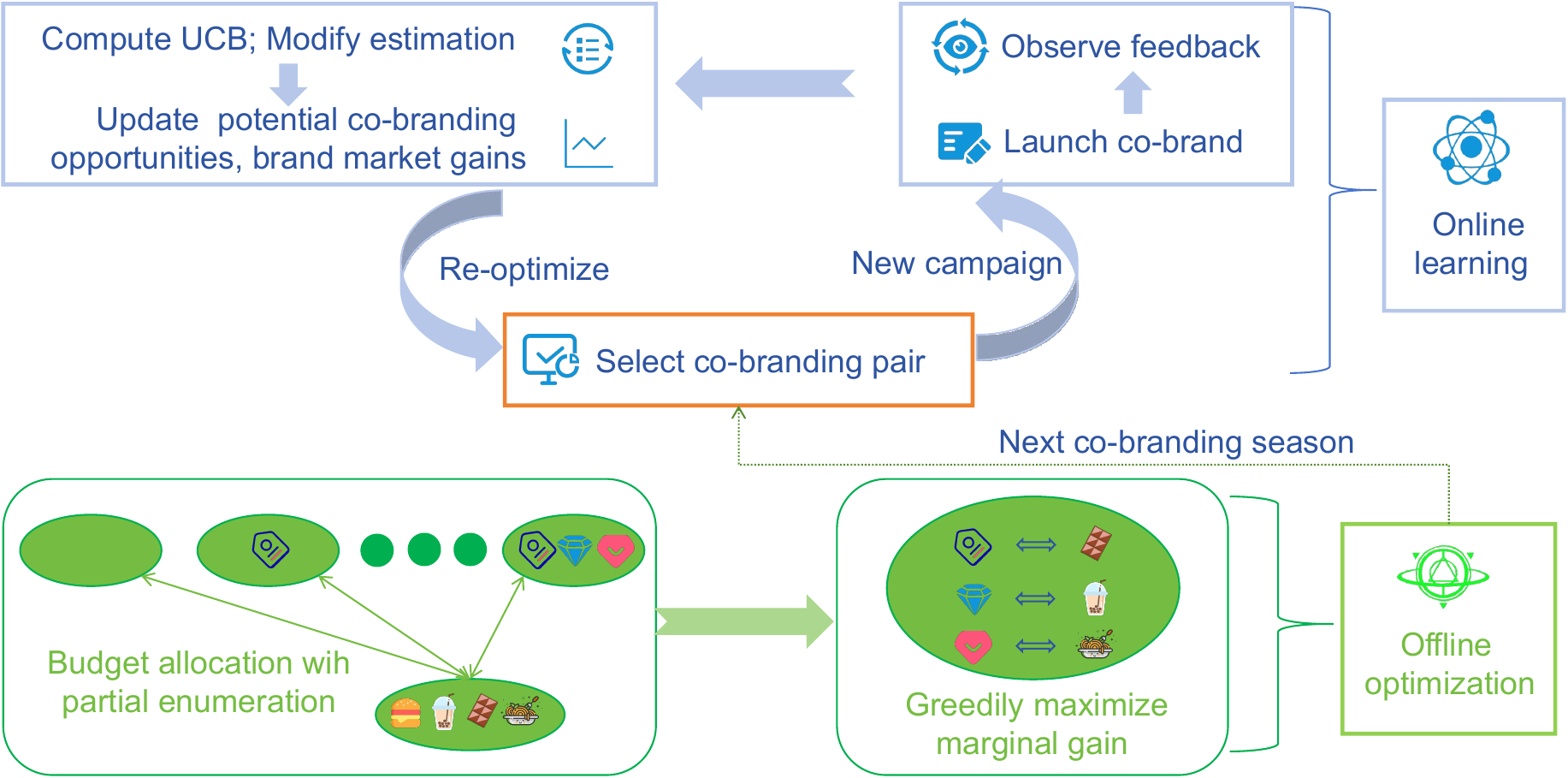}
    \caption{Workflow of Hybrid Online-Offline Algorithm: \small{It begins with the estimation of the co-branding bipartite graph, followed by offline budget allocation for selecting optimal co-branding pairs. After executing initial campaigns, market feedback is integrated to refine the estimations. These updated insights then guide the re-optimization of subsequent campaigns.}}
    \label{fig:alg}
\end{figure}

\subsection{Graph Learning via Online Feedback}\label{subsec:online-overlap}
\textbf{Exploration-Exploitation Trade-off.} In practical markets, the graph $\cG$ information structure is often \textit{partially or entirely unknown}, leading to uncertainties in identifying potential co-branding opportunities and estimating their potential gains.  A naive approach that always selects the current best-known partner based on empirical feedback risks becoming trapped in a \textit{local optimum}, as it emphasizes exploitation over exploration. To overcome this pitfall, we introduce a confidence-based multi-armed bandit (MAB) strategy that effectively balances exploration and exploitation \cite{li2025demystifying,lattimore2020bandit}.

\textbf{Algorithm Workflow.} We introduce our online algorithm in Algorithm~\ref{alg:BCUCB-T}, an adaptation of the combinatorial upper confidence bound (CUCB) algorithm tailored specifically to the co-branding context within the general combinatorial multi-armed bandit (CMAB) framework~\cite{chen2013combinatorial, chen2016combinatorial,dai2024cost}. Unlike the deterministic arm selection employed in the original CUCB algorithm~\cite{chen2016combinatorial}, co-branding introduces unique challenges: the co-branding probability $\bmu$ is influenced by the specific budget allocation rather than being fixed, and the unknown market gain $\bg$ depends on the co-branding probability, which is only observable upon success.
To address these challenges, we extend the concept of the base arm in \cite{chen2013combinatorial, chen2016combinatorial}. Specifically, we define a set of partial base arms, denoted as $\cA = \{(e, s) \mid e \in \cE, s \in \cN_u\}$, where $\mu_{e,s}$ denotes the probability of successful co-branding for each base arm  $(e, s)$. Here, $(e,s)$ corresponds to a co-branding pair $e$ combined with a tentative spending plan $s$. Furthermore, we introduce an additional set of base arms $\cA'$ to represent the remaining base arms corresponding to the unknown market gain $\bg$, which contribute to additional regret in the system. The complete set of base arms is therefore given by $\cA \cup \cA'$, allowing us to address the co-branding dynamics within our algorithmic framework.

\begin{algorithm}[t]
	\caption{Confidence-Based Online Learning for Co-Branding (CBOL)}\label{alg:BCUCB-T}	
	\begin{algorithmic}[1]
	    \REQUIRE Set of  co-branding initiators $\cU$, set of  target brands $\cV$.
\STATE Initialize \(T_{t,e,s}\), \(\hat{\mu}_{e,s}\) for each \((e,s) \in \cA\), and \(T'_v\), \(\hat{g}_v\) for each \(v \in \cA'\) using historical dataset \(\cD\).\label{line: extend over initial weight} 
		\FOR {season $t=0,1,2,...,T $}
		 \STATE    For  $(e,s) \in \cA$,
		    $\rho_{e,s} \leftarrow \cref{eq:confidence_interval}$, $\tilde{\mu}_{e,s} \leftarrow \hat{\mu}_{e,s} + \rho_{e,s}$, $\bar{\mu}_{e,s} \leftarrow \max_{j \in \cN_u,j\leq s} {\tilde{\mu}_{e,j}}$.	 
		    \label{line: over online ucb}
      \STATE   For $v \in \cA'$,
		    $\rho'_{v} \leftarrow \sqrt{\frac{6\hat{V}_{v} \log t}{T_{t,v}}} + \frac{9\log t}{T_{t,v}}$, $\hat{g}_{v} \leftarrow \hat{g}_{v} + \rho'_{v}$. 
		    \label{line: extend over online ucb for nodes}
		\STATE Budget allocation  $\bb \leftarrow$ GPE (Algorithm \ref{alg: begreedy enum}).
		\label{line: over online oracle}
		\STATE Observe co-branding intention feedback $\bm{X}_{t,S_t}$ under budget allocation $\bb$. \label{line: over online feedback}
        \STATE For each $(e,s)$ that receives feedback $X_{e,s}$,  update $T_{t,e,s}\leftarrow T_{t,e,s}+1$, $\hat{\mu}_{e,s}\leftarrow \hat{\mu}_{e,s}+(X_{e,s}-\hat{\mu}_{e,s})/T_{t,e,s}$, 
	   $\hat{V}_{e,s}\leftarrow 
\frac{T_{t,e,s}-1}{T_{t,e,s}}\left(\hat{V}_{e,s} + \frac{1}{T_{t,e,s}} \left(\hat{\mu}_{e,s} - X_{t,e,s}\right)^2\right)$. 
	    \label{line: over online update}
		\STATE {For any successful  co-branding pair $e \in S_t$ with $X_{t,e,s}=1$, observe market gain $\bY_{t,\cV}$ and update $T'_{t,v} \leftarrow T'_{t,v} + 1$,  $\hat{g}_v \leftarrow \hat{g}_v + (Y_v - \hat{g}_v )/T'_{t,v}$}, 
	   $\hat{V}'_{v}\leftarrow \frac{T_{t,v}'-1}{T_{t,v}'}\left(\hat{V}_{v}' + \frac{1}{T_{t,v}'} \left(\hat{g}_{v} - Y_{t,v}\right)^2\right)$.\label{line: update weight} 
		\ENDFOR
		\end{algorithmic}
\end{algorithm}

We estimate the co-branding probability $\mu_{e,s}$ using the empirical mean $\hat{\mu}_{e,s}$, where $T_{t,e,s}$ record the number of times the base arm $(e,s) \in \cA$ has been selected up to season $t$; Similarly, $\hat{g}_v$ represents the empirical mean of $v$'s market gain, with $T'_v$ recording the number of times the target brand $v \in \cA'$ has been selected (line~\ref{line: extend over initial weight}).   Algorithm~\ref{alg:BCUCB-T} leverages confidence bounds to balance exploitation and exploration. Specifically, for each season $t$, the confidence radius $\rho_{e,s}$, computed in line~\ref{line: over online ucb}, regulates the level of exploration for potentially better co-branding pairs. This radius is larger for underexplored arms $(e,s)$ (i.e., when $T_{t,e,s}$ is small), thereby incentivizing exploration.
Similarly, in line~\ref{line: extend over online ucb for nodes}, $\bar{g}_v$ is the optimistic estimate of the market gain $g_v$ with confidence radius $\rho'_v$. Co-branding feedback is collected after executing the action $S_t$ under budget allocation $\bb$, as determined by Algorithm~\ref{alg: begreedy enum} (introduced subsequently). In line~\ref{line: over online update}, the corresponding estimations are updated based on the online bandit feedback regarding co-branding outcomes $\bm{X}_{t,S_t}$. Finally, the unknown market gain is refined in line~\ref{line: update weight} following the implementation of the co-branding campaign.

\textbf{Reduce Redundant Exploration.} Note that the original confidence radius in \cite{chen2016combinatorial,wang2017improving} relies solely on the empirical mean, neglecting variance and resulting in a loose confidence radius for each arm. This can lead to unnecessary exploration. To address this, inspired by \cite{liu2022batch}, we adopt a stronger Bernstein-type concentration bound \cite{lattimore2020bandit}, which incorporates the empirical variance $\hat{V}_{t-1, i}$ to construct an improved confidence radius (line \ref{line: over online ucb}):
\begin{equation}\label{eq:confidence_interval}
    \rho_{e,s}=\sqrt{\frac{6\hat{V}_{e,s} \log t}{T_{t,e,s}}} + \frac{9\log t}{T_{t,e,s}}.
\end{equation}
In the first term of \cref{eq:confidence_interval}, $\hat{V}_{e,s}$ serves as an approximation of the true variance $V_{e,s}$. Since $V_{e,s} \leq (1 - \mu_{e,s}) \mu_{e,s}$, the estimation of $\mu_{e,s}$ becomes more precise when $\mu_{e,s}$ approaches boundary values.
The second term of \cref{eq:confidence_interval} compensates for using the empirical variance $\hat{V}_{e,s}$ instead of $V_{e,s}$. A similar tighter confidence radius strategy is applied to estimate the unknown market gains (line \ref{line: extend over online ucb for nodes}).

\begin{algorithm}[!t]
\caption{Greedy Partial Enumeration for Budget Optimization (GPE)}\label{alg: begreedy enum}
\begin{algorithmic}[1]
		\REQUIRE Co-Branding graph $\cG$, total budget $B$, budget cap $c_u$, tentative spending plans $\cN_u, u\in\cU$, operational constraint $K$.

			\STATE Initialize $\bb_{max} \leftarrow \bm{0}$. 
			\STATE $\cB \leftarrow \{\bb=(b_1, ..., b_U)|0\le b_u \le c_u, b_u \in \cN_u, \sum_{u \in \cU}b_u \le B, \sum_{u \in \cU}\mathbb{I}\{b_u > 0 \} \le K\}$.\label{line: enumeration}
		\FOR{$\bb \in \cB$}
		    \STATE $B'\leftarrow B-\sum_{u \in \cU}b_u$.
		    \STATE Let $\cQ \leftarrow \{(u,s_u) \, | \, u \in \cU, s_u \in \cN_u, 1\le s_u \le  c_u-b_u\}$.
		    \WHILE {$B' > 0$ and $Q \neq \emptyset$ \label{line: enum while begin}}
		        \STATE $(u^*,s^*) \leftarrow \argmax_{(u,s)\in \cQ} \delta(u,s,\bb)/s $. \label{eq: enum largest margin}  \label{line: largest margin} 
		        \IF {$s^* \le B'$}
		            \STATE $s_{u^*} \leftarrow s_{u^*} + s^*$, $B' \leftarrow B' - s^*$.\label{line: deduct}
		            \STATE Adjust all pairs $(u^*,s) \in \cQ$ to $(u^*, s - s^*)$.
		            \STATE Remove all pairs $(u^*,s) \in \cQ$ such that $s \le 0$. \label{line: remove}
		        \ELSE \STATE Remove $(u^*, s^*)$ from $\cQ$.
		            \label{line: enum remove}
		        \ENDIF
	    	\ENDWHILE  
	    	\STATE \textbf{if} $r_{\cG}( \bb) > r_{\cG}( \bm{b_{max}})$, \textbf{then} $\bb_{max} \leftarrow \bb$.\label{line: replace}
		\ENDFOR
		\end{algorithmic}
\end{algorithm}

To further reduce unnecessary exploration, we leverage historical dataset $\cD$ consisting of past season data with a scale of \(D\) to initialize observed potential co-branding probabilities \(\bmu\), brand market gains \(\bg\), and selecting counts \(T_{t,e,s}\) (and \(T'_{t,v}\)) (line \ref{line: extend over initial weight}). Note that naively incorporating historical data can degrade short-term performance, such as over-exploring low-reward arms or under-exploring high-reward ones, thereby incurring high regret \cite{xu2021dual}. Thus, Algorithm \ref{alg:BCUCB-T} excludes historical arm selections when computing the confidence radius \(\rho_{e,s}\) (and \(\rho_v'\)), relying only on online feedback. This balances long-term benefits of historical data with reduced short-term exploration.

\textbf{Non-Decreasing UCB Values.} In line~\ref{line: over online ucb}, we compute the upper confidence bound (UCB) value  \(\tilde{\mu}_{e,s}\). Although higher spending typically increases (or at least does not decrease) collaboration probability in real-world co-branding scenarios, random exploration can cause \(\tilde{\mu}_{e,s}\) to paradoxically drop with larger \(s\). To reflect realistic spending–collaboration trends, we ensure non-decreasing UCBs by setting each \(\tilde{\mu}_{e,s}\) to the maximum of all \(\tilde{\mu}_{e,j}\) with \(j \le s\).

\subsection{Budget Optimization via Offline Planning}

\textbf{Submodular Property Basis.}  To address the budget allocation problem defined in \cref{eq:opt_budget}, we exploit the \textit{submodular property of the reward function}, which intuitively reflects the diminishing marginal returns as the budget increases \cite{liu2021multi,alon2012optimizing} (Formal proof of this property is in Section \ref{subapp:property}). While increasing the budget for a single sub-brand $u$ within a co-branding strategy may not directly exhibit a diminishing marginal effect, it is noted that the overall marginal gain for the entire parent brand $\cU$ decreases as the total budget is distributed. This occurs because the available budget for other sub-brands within the same parent brand becomes limited as more budget is allocated to $u$. We now formally introduce this definition. For any $\bx, \by \in \mathbb{N}_0^U$, we denote $\bx \vee \by, \bx \wedge \by \in \mathbb{N}_0^U $ as the coordinate-wise maximum and minimum of these two vectors: $(\bx \vee \by)_i=\max\{x_i, y_i\}, (\bx \wedge \by)_i=\min\{x_i, y_i\}$.
A function $f: \mathbb{N}_0^U \rightarrow \bbR$ is defined as \textit{submodular} over the integer lattice $\mathbb{N}_0^U$ if it satisfies the following inequality for any $\bx, \by \in \mathbb{N}_0^U$: $f(\bx \vee \by) + f(\bx \wedge \by) \le f(\bx) + f(\by)$.
Let $\bch_i = (0, ..., 1, ..., 0)$ represent the one-hot vector with the $i$-th equal to 1 and all other elements equal to 0.

\textbf{Refining Approximation Ratio.} Building on strategies that combine greedy algorithms with submodular property \cite{khuller1999budgeted, alon2012optimizing, liu2021multi}, which mainly give \(\alpha = \frac{1}{2}(1-e^{-1}) \approx 0.316\)-approximation and \(\alpha = (1-e^{-\eta}) \approx 0.357\)-approximation (where \(\eta\) satisfies \(e^\eta = 2 - \eta\)), we integrate partial enumeration methods to further refine the approximation ratio. In the context of co-branding, the high cost of creating co-brands for parent brands requires significant preliminary efforts, such as extensive market research and the costs associated with creating co-branding materials. Unlike traditional problems focused on reducing time complexity, the need for such upfront investments in co-branding makes it essential to optimize the approximation ratio without sacrificing the quality of the outcomes. Leveraging these insights, we focus on improving the approximation ratio for better co-branding results, while ensuring that the approach remains computationally tractable within polynomial time.
The detailed procedure is presented in Algorithm \ref{alg: begreedy enum}.

\textbf{Algorithm Workflow.} Algorithm \ref{alg: begreedy enum} constructs an initial solution set \( \mathcal{B} \), which stores all partial solutions that allocate budgets to at most \( K \) sub-brands (line~\ref{line: enumeration}). Each initial solution corresponds to a potential final solution, from which we must select the optimal one. The parameter \( K \) serves as an operational constraint, and as \( K \) increases, the number of possible initial solutions grows, potentially exponentially. While increasing \( K \) can improve the approximation ratio by permitting more flexible allocations (e.g., distributing budgets evenly or dedicating the entire budget \( c_u \) is allocated to a single sub-brand \( u \), it also significantly increases the computational complexity. Based on the theoretical analysis in \Cref{subapp:offline}, \( K = 3 \) provides a relatively reasonable trade-off between runtime efficiency and approximation quality. For any partial solution, Algorithm \ref{alg: begreedy enum} maintains a set $\cQ$, where any pair $(u,s) \in \cQ$ represents a tentative plan to allocate an additional budget of $s$ to sub-brand $u$.
The algorithm follows a \textit{greedy iteration strategy}: a while loop (line~\ref{line: enum while begin}-\ref{line: remove}) that iteratively selects the pair $(u,s)$ from $\cQ$ such that that maximizes the \textit{per-unit marginal gain} $\delta(u,s, \bb)$. Specifically, the per-unit marginal gain is defined as $\delta(u,s, \bb) = (r_{\cG}( \bb+s\bm{\chi}_u) - r_{\cG}( 
\bb))/s$,  where \(r_{\cG}\) is the reward function, \(\bb\) is the current budget allocation, and \(\bm{\chi}_u\) represents the one-hot allocation vector for sub-brand \(u\).
If \(s\) is within the remaining budget, it is allocated to sub-brand \(u\); otherwise, the algorithm proceeds to the next iteration.
Note that Algorithm \ref{alg: begreedy enum} ensures monotonicity of the input due to the maximum operator (line~\ref{line: over online ucb} in Algorithm \ref{alg:BCUCB-T})\footnote{A function \(f\) is monotone if  \(f(\bx) \leq f(\by)\) for all vectors \(\bx \leq \by\), where  $\bx \ge \by$ means $x_i \ge y_i$ for all elements $i$.}. 

\textbf{Integration with Online Learning.}  After budget allocation, sub-brands use the learned co-branding graph \( \mathcal{G} \) to engage with target brands for executing co-branding campaigns.  If a sub-brand \( u \) has multiple successful co-branding partners from the target brand set $\cV$, it allocates its budget \( b_u \) in a balanced manner, thus ensuring the exploration of diverse co-branding opportunities without over-investing in a single target brand. Alternatively, the budget can be proportionally allocated based on market returns, target brand performance, and strategic priorities, which depend on the brand's economic context and are beyond the primary focus of our research
Executing a co-branding campaign involves forming partnerships, deploying marketing resources, and monitoring performance. The corresponding feedback is used to update the estimates in future co-branding seasons, as described in Algorithm \ref{alg:BCUCB-T}.

\section{Theoretical Analysis}\label{sec:theorem}
 In this section, we provide a detailed theoretical analysis of the performance of our proposed algorithms. The key results are summarized here, with formal proofs presented in \Cref{app: proof}.

\subsection{Monotone Submodular Property}\label{subapp:property}

 \begin{lemma}[Graph-based Lattice Submodularity]\label{thm: lattice submodular}
For any graph $\cG$, $r_{\cG}(\cdot): \mathbb{N}_0^U \rightarrow \R$ is monotone and submodular, i.e.,
	$r_{\cG}(\bm{x \wedge y}) + r_{\cG}(\bm{x \vee y}) \le r_{\cG}(\bm{x}) + r_{\cG}(\bm{y})$ for any $\bm{x},\bm{y} \in \mathbb{N}_0^U$, and $r_{\cG}(\bm{x}) \le r_{\cG}(\bm{y})$ if $\bm{x} \le \bm{y}$.
\end{lemma}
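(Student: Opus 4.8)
The plan is to exploit the fact that $r_{\cG}$ decomposes additively over target brands and to reduce the whole claim to a one-line inequality about products of non-negative, non-increasing univariate factors. Write $N(v)=\{u\in\cU:(u,v)\in\cE\}$ and adopt the convention $\mu_{e,0}=0$ (a sub-brand with zero budget forms no partnership), so that the action in \cref{eq:ex_reward} can be taken as the set of edges incident to sub-brands with positive budget; equivalently the product runs over all of $N(v)$. Then $r_{\cG}(\bb)=\sum_{v\in\cV}g_v\,f_v(\bb)$ with $f_v(\bb)=1-P_v(\bb)$ and $P_v(\bb)=\prod_{u\in N(v)}\bigl(1-\mu_{(u,v),b_u}\bigr)$. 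Since $g_v\ge 0$ (market gains are revenues) and a non-negative linear combination preserves both monotonicity and lattice submodularity, it suffices to show each $f_v$ is monotone and submodular on $\mathbb{N}_0^U$.

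For monotonicity I would invoke that $s\mapsto\mu_{e,s}$ is non-decreasing — a modeling assumption from \cref{subsec:bipartite_graph_model} (``higher investment generally increases the likelihood of successful co-branding''), which on the estimated graph handed to Algorithm~\ref{alg: begreedy enum} is also enforced by the max operator in line~\ref{line: over online ucb} of Algorithm~\ref{alg:BCUCB-T}. Hence each factor $h_u(s):=1-\mu_{(u,v),s}\in[0,1]$ is non-increasing in $s$; for $\bx\le\by$ this gives $0\le h_u(y_u)\le h_u(x_u)$ coordinatewise, and multiplying these non-negative inequalities yields $P_v(\by)\le P_v(\bx)$, i.e. $f_v(\bx)\le f_v(\by)$, so $r_{\cG}(\bx)\le r_{\cG}(\by)$.

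For submodularity, observe that $f_v(\bx\wedge\by)+f_v(\bx\vee\by)\le f_v(\bx)+f_v(\by)$ is equivalent to $P_v(\bx)+P_v(\by)\le P_v(\bx\wedge\by)+P_v(\bx\vee\by)$, i.e. to supermodularity of $P_v$. Given $\bx,\by$, partition $N(v)$ into $S_<=\{u:x_u<y_u\}$, $S_==\{u:x_u=y_u\}$, $S_>=\{u:x_u>y_u\}$, and set $A:=\prod_{u\in S_=}h_u(x_u)$, $L_x:=\prod_{u\in S_<}h_u(x_u)$, $L_y:=\prod_{u\in S_<}h_u(y_u)$, $G_x:=\prod_{u\in S_>}h_u(x_u)$, $G_y:=\prod_{u\in S_>}h_u(y_u)$. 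Tracking which coordinate each of $\bx,\by,\bx\wedge\by,\bx\vee\by$ selects on $S_<,S_=,S_>$ gives $P_v(\bx)=A L_x G_x$, $P_v(\by)=A L_y G_y$, $P_v(\bx\wedge\by)=A L_x G_y$, $P_v(\bx\vee\by)=A L_y G_x$, so the desired inequality reduces to $A\,(L_x-L_y)\,(G_y-G_x)\ge 0$. Each factor is non-negative: $A\ge 0$ as a product of numbers in $[0,1]$; $L_x\ge L_y\ge 0$ since $h_u$ is non-increasing and $x_u<y_u$ on $S_<$; and $G_y\ge G_x\ge 0$ since $h_u$ is non-increasing and $x_u>y_u$ on $S_>$ (and if $A=0$ the inequality is trivial). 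Summing $g_v\,(f_v(\bx\wedge\by)+f_v(\bx\vee\by))\le g_v\,(f_v(\bx)+f_v(\by))$ over $v$ finishes the proof.

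The computations are elementary; the only step needing care is the reduction itself — fixing the right neighbour set and the $\mu_{e,0}=0$ convention so that $r_{\cG}$ is genuinely a function of $\bb$ of the form $\sum_v g_v(1-P_v)$, and being explicit that it is the \emph{monotonicity} of $\mu_{e,\cdot}$ (not merely $\mu_{e,s}\in[0,1]$) that simultaneously yields $L_x\ge L_y$ and $G_y\ge G_x$. Everything else is the standard observation that a product of non-negative, coordinatewise-non-increasing univariate factors is supermodular on the integer lattice.
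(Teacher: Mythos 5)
Your proof is correct, and it takes a genuinely different route from the paper's. The paper proves submodularity coordinatewise: it verifies the unit-increment diminishing-returns condition $h(\bm{x}) - h(\bm{x}+\bch_l) \ge h(\bm{x}+\bch_j) - h(\bm{x}+\bch_l+\bch_j)$ for distinct $l,j$ (its \Cref{eq:submoular_tem}), and then invokes a separate equivalence result (Lemma~\ref{lem: lattice submodular}, itself proved in \Cref{subapp:proof_lem} via a chain of increment inequalities and a citation to \cite{soma2014optimal}) to pass from that local condition to the lattice inequality $r_{\cG}(\bm{x}\wedge\bm{y}) + r_{\cG}(\bm{x}\vee\bm{y}) \le r_{\cG}(\bm{x}) + r_{\cG}(\bm{y})$. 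You instead verify the lattice inequality directly for arbitrary $\bm{x},\bm{y}$: after the same decomposition $r_{\cG}(\bb)=\sum_v g_v\bigl(1-P_v(\bb)\bigr)$, you partition coordinates into $S_<,S_=,S_>$ and reduce supermodularity of the product $P_v$ to the exact identity $P_v(\bm{x}\wedge\bm{y})+P_v(\bm{x}\vee\bm{y})-P_v(\bm{x})-P_v(\bm{y}) = A\,(L_x-L_y)(G_y-G_x)\ge 0$, whose sign follows from non-negativity and the budget-monotonicity of $\mu_{e,\cdot}$ — the same modeling assumption the paper uses. What your route buys is self-containedness (no equivalence lemma, no appendix machinery or external citation) and an explicit formula for the submodularity surplus, plus a cleaner formalization of the reduction itself (the $\mu_{e,0}=0$ convention making $r_{\cG}$ an honest function of $\bb$); what the paper's route buys is modularity — the unit-increment characterization is the standard tool in the lattice-submodular/CMAB literature and is stated once so it can be reused — at the cost of needing that extra lemma and its proof. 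Your monotonicity argument (coordinatewise non-increasing factors in $[0,1]$, multiplied) is essentially the same computation as the paper's explicit marginal-gain formula, just packaged differently. No gaps.
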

\begin{proof}
(\textbf{Monotone}) Recall the definition of expected reward $ r_{\cG}(\bb) = \sum_{v\in \cV} g_v \left(1 - \prod_{e=(u,v) \in S} (1 - \mu_{e,b_u})\right)$ and  $\bch_i $ represents the one-hot vector with the $i$-th equal to 1 and all other elements equal to 0. 
Since the fact that the co-branding willingness $\bmu$ does not decrease as budget increases, we have
\begin{align*}
&r_{\cG}( \bm{x}+\bch_j)-r_{\cG}( \bm{x})\\
&=\sum_{v\in\cV}g_v((\prod_{e=(j,v),i\neq j}\left(1-\mu_{(i,v),x_i}))\left(\mu_{e,x_j+1}-\mu_{e,x_j}\right)\right) \ge 0,
\end{align*}
for any $\bx \in \mathbb{N}_0^U, j \in \cU$.
Then we can use the above inequality repeatedly to show that $r_{\cG}( \bm{x}) \le r_{\cG}( \bm{y})$ when $\bm{x} \le \bm{y}$.

(\textbf{Submodular}) For submodular property of $r_{\cG}(\bb)$, we will first prove that $(1 - \prod_{e=(u,v) \in S} (1 - \mu_{e,b_u}))$ is submodular for any $v \in \cV$.

Denote \(h(\bm{x}) = \prod_{e=(u,v) \in S} (1 - \mu_{e,x_u})\) for simplicity. We aim to show that $h(\bm{x})$ satisfies the following inequality for any \(\bm{x} \in \mathbb{N}_0^U\), \(v \in \mathcal{V}\), and distinct \(l,j \in \mathcal{U}\) :
\begin{equation}\label{eq:submoular_tem}
h(\bm{x}) - h(\bm{x} + \bch_l) \geq h(\bm{x} + \bch_j) - h(\bm{x} + \bch_l + \bch_j).
\end{equation}
The proof is as follows. Consider the left-hand side in \Cref{eq:submoular_tem}: 
\begin{align*}
&
\quad h(\bm{x}) - h(\bm{x} + \bch_l) = \\
& \left( \prod_{e \in S \setminus \{(l,v), (j,v)\}} (1 - \mu_{e,x_u}) \right) (1 - \mu_{(j,v),x_j}) (\mu_{(l,v),x_l+1} - \mu_{(l,v),x_l}).
\end{align*}
Similarly, the right-hand side in \Cref{eq:submoular_tem} is:
\begin{align*}
& 
\quad h(\bm{x} + \bch_j) - h(\bm{x} + \bch_l + \bch_j) = \\
&\left( \prod_{e \in S \setminus \{(l,v), (j,v)\}} (1 - \mu_{e,x_u}) \right) (1 - \mu_{(j,v),x_j+1}) (\mu_{(l,v),x_l+1} - \mu_{(l,v),x_l}).
\end{align*} 
The common factor in both expressions is:
\[ \left( \prod_{e=(u,v) \in S, e \neq (l,v), (j,v)} (1 - \mu_{e,x_u}) \right) (\mu_{(l,v),x_l+1} - \mu_{(l,v),x_l}). \]
Thus,  \Cref{eq:submoular_tem} reduces to comparing:
$ (1 - \mu_{(j,v),x_j}) \geq (1 - \mu_{(j,v),x_j+1}). $
Since \( \mu_{e,b_u} \) is non-decreasing in \( b_u \) (i.e., \( \mu_{(j,v),x_j+1} \geq \mu_{(j,v),x_j} \)), it follows that: \(1 - \mu_{(l,v),x_l} \geq 1 - \mu_{(j,v),x_j+1}\).

We invoke the following lemma to formalize the submodularity:
\begin{lemma}[Lattice Submodularity Condition]\label{lem: lattice submodular}
A function $f: \mathbb{N}_0^U \to \mathbb{R}$ is submodular if and only if
$f(\bm{x}+\bm{\chi}_i) - f(\bm{x}) \ge f(\bm{x} + \bm{\chi}_j +\bm{\chi}_i) - f(\bm{x} + \bm{\chi}_j),$
for any $\bm{x} \in \mathbb{N}_0^U$ and $i\neq j$.
\end{lemma}

Then Lemma~\ref{lem: lattice submodular} (proved in \Cref{subapp:proof_lem}) and the holding of \Cref{eq:submoular_tem}  confirm the submodularity of $h(\bm{x}) = (1 - \prod_{e=(u,v) \in S} (1 - \mu_{e,b_u}))$ for any \( v \in \mathcal{V} \). With the explicit formula of the reward function given by Eq.~(\ref{eq:ex_reward}), it shows the reward function is a weighted sum over all \( v \in \mathcal{V} \) with \( g_v \). Since a non-negative weighted sum of submodular functions is submodular, \(r_{\cG}(\bb) \) is submodular.
\end{proof}

\subsection{Online Learning}

\begin{theorem}[Regret Bound] \label{thm: regret}
Algorithm~\ref{alg:BCUCB-T} has the following instance-independent sub-linear regret bound   $O(V\sqrt{(NU+1)T\log T} $ $+ \log (UVT+VT) \log T)$, where $N = \max_{u \in \cU}|\cN_u|$.
\end{theorem}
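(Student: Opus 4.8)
The plan is to cast the co-branding problem as an instance of the combinatorial multi-armed bandit (CMAB) framework with probabilistically triggered arms and then invoke a CUCB-type regret analysis, adapting it to the two-layer structure of our base arms $\cA \cup \cA'$. First I would verify the two structural properties needed for a CMAB regret bound: (i) \emph{monotonicity} of the reward function $r_{\cG}$ in the arm parameters $(\bmu, \bg)$, which follows directly from \Cref{thm: lattice submodular} and the non-decreasing UCB construction in line~\ref{line: over online ucb}; and (ii) a \emph{bounded smoothness / Lipschitz} condition, namely that replacing the true parameters by optimistic estimates $(\bar\mu_{e,s}, \bar g_v)$ changes the reward by at most a weighted sum of the per-arm confidence radii. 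For the market-gain layer, $r_{\cG}$ is linear in $\bg$, so the $\bg$-part of the smoothness constant is simply $\sum_{v}(\text{coverage prob.})\,|\bar g_v - g_v| \le \sum_v |\bar g_v - g_v|$; for the $\bmu$-layer, the product-form $1-\prod_{e}(1-\mu_{e,b_u})$ is $1$-Lipschitz in each $\mu_{e,b_u}$, and the relevant arm is triggered precisely when the co-branding pair $e$ is in $S_t$, so the triggering probability of $(e,s)$ is at most $1$ and the standard ``triggering probability modulated'' (TPM) bounded smoothness holds with constant proportional to $\max_v g_v =: g_{\max}$ (absorbed into the $V$ factor of the stated bound since $g_v \in [0,1]$ after normalization).

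Next I would set up the event decomposition standard in CUCB proofs: condition on the ``nice'' event that every confidence interval $[\hat\mu_{e,s}-\rho_{e,s}, \hat\mu_{e,s}+\rho_{e,s}]$ and $[\hat g_v - \rho'_v, \hat g_v + \rho'_v]$ contains the true value, which by the Bernstein-type concentration of \cite{audibert2009exploration,liu2022batch} holds with probability $1-O(1/t)$ per round, contributing only an $O(\log T)$ additive term to the regret (this is the source of the additive $\log(UVT+VT)\log T$ term — the union bound runs over the $|\cA| = O(UVN)$ probability arms and the $|\cA'|=O(V)$ gain arms). On the nice event, monotonicity gives $r_{\cG}(\bb^*) \le$ (reward computed with optimistic parameters on the GPE-selected action after accounting for the $\alpha$ factor, using that GPE is an $\alpha$-approximation oracle on monotone submodular inputs), so the per-round $\alpha$-regret is bounded by the smoothness sum $\sum_{(e,s)\in S_t} (\bar\mu_{e,s} - \mu_{e,s}) \cdot g_{\max}\cdot(\text{other factors} \le 1) + \sum_{v}(\bar g_v - g_v)$, i.e.\ by $O\big(V\sum_{(e,s) \text{ triggered}} \rho_{e,s} + \sum_{v \text{ triggered}} \rho'_v\big)$.

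The remaining and most delicate step is summing the confidence radii over all $T$ rounds. Here I would use the fact that $T_{t,e,s}$ increments each time $(e,s)$ is triggered and apply the counting argument $\sum_{t : (e,s) \text{ triggered}} \sqrt{\frac{\log t}{T_{t,e,s}}} = O(\sqrt{T_{T,e,s}\log T})$ together with Cauchy--Schwarz across the at most $NU+1$ groups of arms that can be simultaneously triggered in one round (the ``$+1$'' accounting for the single gain arm attached to each covered node, collapsed appropriately), giving $\sum_t \sum_{(e,s)\in S_t}\rho_{e,s} = O(\sqrt{(NU)T\log T})$ and $\sum_t\sum_v \rho'_v = O(\sqrt{VT\log T})$; the Bernstein second terms $\frac{9\log t}{T_{t,e,s}}$ sum to $O(\log(UVN)\log T)$. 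Multiplying the $\bmu$-part by the $V$ from bounded smoothness and combining yields the claimed $O(V\sqrt{(NU+1)T\log T} + \log(UVT+VT)\log T)$. The main obstacle I anticipate is handling the \emph{coupling} between the two layers cleanly: the gain arm $v$ is only triggered (observed) when some $(e,s)$ with $e=(u,v)$ succeeds, so the effective number of gain-arm observations is itself random and correlated with the $\bmu$-estimates; I would resolve this by a triggering-probability argument showing that, on the nice event, the action chosen always has coverage probability bounded below on the relevant pairs (or, failing a uniform lower bound, by the TPM technique of \cite{wang2017improving} that avoids any lower-bound assumption), so the gain-arm counts grow at the required rate and their contribution to regret remains $O(V\sqrt{T\log T})$, subsumed in the stated bound.
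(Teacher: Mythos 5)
Your high-level plan (cast the problem as CMAB with probabilistically triggered arms, verify monotonicity and TPM bounded smoothness, condition on a confidence-interval event, and sum confidence radii) is a reasonable starting point, but as written it does not reach the stated bound, and the key quantitative step is wrong. First, the per-round set of triggered arms is not of size $NU+1$: under a fixed allocation $\bb_t$, one arm $(e,b_u)$ is observed for \emph{every} edge $e=(u,v)\in\cE$, plus up to $V$ gain arms, so $|\tau_t|\le UV+V$, while the total number of base arms is $NUV+V$. Redoing your Cauchy--Schwarz step correctly with these counts (and with the per-arm smoothness coefficient $g_v\le 1$, so there is no legitimate ``extra $V$ from bounded smoothness'' to multiply in at the end) gives $\sum_t\sum_{(e,s)\in S_t}\rho_{e,s}=O\bigl(UV\sqrt{NT\log T}\bigr)$, i.e.\ a leading term of order $V\sqrt{NU^2\,T\log T}$ --- a factor $\sqrt{U}$ larger than the claimed $V\sqrt{(NU+1)T\log T}$. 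This is not a bookkeeping slip: a plain Hoeffding-style CUCB/TPM summation of radii provably cannot remove that batch-size factor, so your route establishes a weaker theorem than the one stated.

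The paper closes exactly this gap by exploiting the variance-aware construction in \cref{eq:confidence_interval} rather than treating it as a cosmetic refinement. Its proof follows the TPVM machinery of \cite{liu2022batch}: the regret is decomposed via two events $E_{t,1},E_{t,2}$ whose error terms carry a $\sqrt{V}$ variance-type smoothness coefficient together with the triggering probabilities $p_a^{\bb_t}$, a regret-allocation function $\kappa_{a,T}(\ell)$ converts these into a per-arm \emph{instance-dependent} bound with leading constant proportional to $V/\tilde\Delta_a^{\min}$ over the $V(NU+1)$ arms (after shrinking the observed-arm set to the $UV+V$ arms that actually affect the reward, and collapsing the historical dataset $\cD$ into a single pull so it only contributes a constant), and only then is the instance-independent statement obtained by the threshold trick of splitting on $\Delta_{\bb_t}\lessgtr\Delta$ and optimizing $\Delta$, which yields $T\Delta\approx V\sqrt{(NU+1)T\log T}$. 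Incidentally, the additive $\log(UVT+VT)\log T$ term arises from the lower-order $\log\bigl((UV+V)/\Delta\bigr)$ piece of that instance-dependent bound after substituting the optimized $\Delta$, not (as you suggest) from the union bound over confidence-interval failures. To repair your proposal you would need either to adopt this variance-aware, gap-dependent-then-threshold argument, or to supply some other mechanism that makes the leading term independent of the per-round batch size $UV$; summing Bernstein radii in the worst case does not do it.
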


\textbf{Remark 1.} We also provide an instance-dependent regret analysis for Algorithm~\ref{alg:BCUCB-T}, as detailed in \Cref{subapp:dependent}.
  Compared to prior work \cite{chen2013combinatorial,liu2021multi}, our base arms expand from $\cA$ to $\cA \bigcup \cA'$, introducing higher uncertainty and additional regret relative to scenarios with known market gains. Through careful analysis, we incorporate the error term arising from the unknown and uncertain market gain into the base arms. In standard CMAB formulations \cite{liu2022batch,chen2016combinatorial,dai2025variance}, the set $\tau$ is typically defined as the set of triggered arms. In our case, this would imply a size of $NUV+V$.  However, since only the arms in  $\{(e,s)\in \cA| b_u=s\}$, representing actual spending decisions, and their corresponding target brands in $\cA'$ affect the rewards, we redefine $\tau$ as the set of observed base arms within a determined budget. This modification reduces the size of $\tau$ to $UV+V$, thereby improving the leading $\ln T$ term in the regret by a factor of $ O(\frac{U+1}{NU+1})$.   Additionally, we treat all past selections of a base arm as a single pull, using the average reward from historical dataset $\cD$, which bounds the regret from $\cD$ by a time-independent constant.

\subsection{Offline Optimization}
\begin{theorem}[Approximation]\label{thm:offline}
Algorithm~\ref{alg: begreedy enum} achieves a $(1-1/e)$-approximate solution, i.e., the approximation ratio is $\alpha=1-1/e$.
\end{theorem}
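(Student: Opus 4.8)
The plan is to combine the classic analysis of the budgeted greedy algorithm for monotone submodular maximization under a knapsack constraint with a partial-enumeration seeding step, exactly as in the Khuller--Moss--Naor / Sviridenko line of work. By Lemma~\ref{thm: lattice submodular} the reward $r_{\cG}$ is monotone and submodular on the integer lattice, so the tentative-spending-plan structure lets us treat each unit of budget assigned to a sub-brand as a ``ground element'' and the constraint $\sum_u b_u \le B$ as a single knapsack. The key is that Algorithm~\ref{alg: begreedy enum} enumerates all feasible partial allocations that touch at most $K=3$ sub-brands, and for each such seed $\bb$ it runs the per-unit-marginal-gain greedy rule on the residual budget $B' = B - \sum_u b_u$, finally keeping the best completed allocation.

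First I would fix the optimal allocation $\bb^*$ and, for the seed $\bb$ equal to the $K=3$ sub-brands (or spending-plan increments) of $\bb^*$ with the largest individual marginal contributions (this seed is enumerated by construction in line~\ref{line: enumeration}), argue that after the greedy completion the reward is at least $(1-1/e)r_{\cG}(\bb^*)$. The standard argument: order the greedy increments $1,2,\dots$; let $r_j$ be the reward after the first $j$ units of residual budget are spent and $r_0 = r_{\cG}(\bb)$; using submodularity plus monotonicity, each greedy step picks the increment with the best per-unit marginal ratio, so the marginal gain per unit is at least $(r_{\cG}(\bb \vee \bb^*) - r_j)/B'$ — here one needs that $\bb^*$ restricted to the residual costs at most $B'$ and is reachable through admissible increments. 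Telescoping this recursion over the $B'$ units gives $r_{\cG}(\bb \vee \bb^*) - r_{\text{final}} \le (1-1/B')^{B'}(r_{\cG}(\bb \vee \bb^*) - r_{\cG}(\bb)) \le e^{-1}(r_{\cG}(\bb \vee \bb^*) - r_{\cG}(\bb))$, and since $r_{\cG}(\bb \vee \bb^*) \ge r_{\cG}(\bb^*)$ by monotonicity, rearranging yields $r_{\text{final}} \ge (1-1/e)r_{\cG}(\bb^*) + e^{-1} r_{\cG}(\bb) \ge (1-1/e) r_{\cG}(\bb^*)$. Since Algorithm~\ref{alg: begreedy enum} returns the best allocation over all seeds, including this one, the bound follows.

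The main obstacle — and the reason $K=3$ rather than $K=1$ is needed — is the ``large element'' problem: a greedy step may be blocked because the cheapest admissible increment toward $\bb^*$ exceeds the remaining budget $B'$, which breaks the clean per-unit recursion. The fix is the partial enumeration: by pre-seeding with the (up to) three highest-value components of $\bb^*$, any increment of $\bb^*$ not already in the seed has per-unit marginal value no larger than each seeded one, so when such an increment is rejected for being too large, the reward already accumulated in the seed dominates the shortfall. Making this precise — quantifying that the unseeded part of $\bb^*$ is ``small'' relative to what the greedy completion can extract, and checking the integer-lattice and $b_u \le c_u$, $b_u \in \cN_u$ constraints do not obstruct reachability of the relevant sub-allocations of $\bb^*$ — is the delicate part. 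I would handle it by the usual case split: either greedy spends (almost) all of $B'$, in which case the telescoping bound applies directly, or greedy halts early with leftover budget because every remaining admissible increment is too large, in which case every element of $\bb^* \setminus \bb$ individually has small per-unit value and the seed already captures a $(1-1/e)$ fraction. Finally I would note the monotonicity of the UCB inputs (guaranteed by the $\max$ in line~\ref{line: over online ucb} of Algorithm~\ref{alg:BCUCB-T}) is what makes $r_{\cG}$ monotone on the estimated graph, so the guarantee transfers to the online-estimated instance used inside Algorithm~\ref{alg:BCUCB-T}.
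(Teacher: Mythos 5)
Your proposal follows essentially the same route as the paper's proof: the Khuller--Moss--Naor-style partial enumeration \cite{khuller1999budgeted} seeded with the three largest marginal-gain components $(i^*_1,s^*_1),(i^*_2,s^*_2),(i^*_3,s^*_3)$ of the reordered optimum, greedy completion by per-unit marginal gain, the $(1-1/e)$ telescoping bound for the greedy prefix including the first blocked increment (which the paper obtains by invoking Lemma~9 of \cite{liu2021multi} with $\sum_j s^j \ge B$, and which you re-derive directly), and the key step of bounding the blocked increment's value by the seed, $\Delta^i \le r_{\cG}(\bb^1)/3$, exactly as your ``seed dominates the shortfall'' argument. The only looseness is in your final case split: the paper does not need the claim that ``the seed already captures a $(1-1/e)$ fraction''---it simply subtracts the bound $\Delta^i \le r_{\cG}(\bb^1)/3$ from the telescoping inequality at the first blocked iteration to get $r_{\cG}(\bb^{i}) \ge (1-1/3)\,r_{\cG}(\bb^1) + (1-1/e)\left(r_{\cG}(\bb^*) - r_{\cG}(\bb^1)\right) \ge (1-1/e)\,r_{\cG}(\bb^*)$, which is the precise version of the combination you label as the delicate part.
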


\textbf{Remark 2.} 
By combining partial enumeration techniques with a greedy algorithm, we exploit the unique properties and requirements of the co-branding optimization problem to achieve a $(1-1/e)$ approximation ratio.
This is the best possible solution in polynomial time unless $P=NP$ \cite{alon2012optimizing,khuller1999budgeted}.  Specifically, for any partial enumeration $\bb \in \cB$, the while loop contains $B$ iterations in the worst case. The size of the queue $\cQ$ is $O(N^2UV)$, and line~\ref{line: largest margin} requires $O(V)$ to calculate $\delta(u,s,\bb)$ through pre-computation and update. As a result, the time complexity of any partial enumeration in Algorithm~\ref{alg: begreedy enum}  is $O(BN^2UV^2)$,  which scales linearly with $U$ and quadratically with $V$, avoiding exponential growth. 
Note that in practice, budget allocations are typically limited to a finite set of candidate plans, $\cN_u$, which significantly reduces the scale of the problem. For instance, when only three candidate allocation tiers (e.g., low, medium, high) are provided, a single sub-brand would have at most three possible budget allocations, further optimizing computational efficiency.
Additionally, for large graphs, the parameter 
$K$, which balances time and performance, can be set to 1 to further accelerate processing.

\section{Experiments}\label{sec:experiment}

In this section, we conduct extensive experiments.
Specifically, we seek to address the following research questions (RQs):

\textbf{RQ1}: In high-uncertainty co-branding recommendations, can our online learning algorithm achieve superior both long-term \& short-term revenue performance compared to existing methods?

\textbf{RQ2}: Can our unified offline budget allocation strategy across multiple sub-brands significantly enhance overall revenue?

\textbf{RQ3}: Across diverse scenarios (e.g., different budget caps, seasons, and tentative plans) can the proposed framework maintain stable performance improvements?

\subsection{Implementation Details}\label{sec:experiment-settings}

All experiments were conducted on a MacBook Pro with an M3 8-core CPU. To ensure statistical reliability, each experiment was repeated 10 times, and we reported the average results along with confidence intervals derived from these repetitions.

\textbf{Baseline Algorithms.}
  We select the following algorithms from existing studies as baselines for comparison with \Cref{alg:BCUCB-T}:
  1) EMP \cite{lattimore2020bandit}, which allocates budgets and co-brands according to the empirical mean without exploration. 2) $\epsilon$-Greedy \cite{singh2021reinforcement}, which distributes the budget uniformly, randomly selecting co-branding pairs with probability $\epsilon$, or otherwise allocates budgets according to empirical mean   ($\epsilon$ set to 0.1 in all tests). 3) Thompson Sampling (TS)~\cite{wang2018thompson}, which uses Beta distribution $\text{Beta}(\beta_1,\beta_2)$ ($\beta_1=\beta_2=1$ initially) as a prior for estimating the co-branding bipartite graph model. 4) CUCB~\cite{chen2016combinatorial,wang2017improving}, a combinatorial bandit algorithm that does not mitigate redundant exploration. In addition, we also specifically compare \Cref{alg: begreedy enum} with the following offline algorithms: GBO \cite{alon2012optimizing}, a greedy algorithm for budget optimization without partial enumeration, and the proportional methods PROP-S and PROP-W \cite{liu2021multi}, which allocate budgets for co-branding based on the number of sub-brands and the market gains, respectively.

\textbf{Data Generation and Pre-processing.}
Our synthetic dataset follows prior studies~\cite{ wang2017improving,liu2022batch} and is structured as a \(10 \times 60\) matrix, where rows represent ten sub-brands under a parent brand initiating co-branding, and columns correspond to sixty potential target brands. To evaluate the co-branding strategy under varying budgets, we synthesize collaboration probabilities between sub-brands \(\mathcal{U}\) and target brands \(\mathcal{V}\), considering both inherent brand compatibility and budget impact. Specifically, we generate a compatibility parameter \(\nu_{(u,v)} \sim U(-1,1)\) for each pair \((u,v)\), representing baseline affinity, where negative values indicate minimal collaboration intent without budget support. Given a sub-brand \(u\) with budget \(s_u\) (budget allocation scheme detailed later), the probability of successful collaboration is defined as \(\mu_{(u,v), s_u} = \sigma(\nu_{(u,v)} + s_u)\), where \(\sigma(x) = 1/(1 + e^{-x})\) \cite{faury2020improved,liu2024combinatorial}. As \(s_u\) increases, \(\nu_{(u,v)} + s_u\) rises, leading to a higher \(\sigma(\cdot)\), this ensures that larger budgets enhance partnership success \cite{johnson2014logistic}. To reflect market uncertainty where higher investment does not always yield higher returns, we introduce a stochastic factor \(g_v \sim U(0,1)\) for each \(v \in \mathcal{V}\).

We further conduct experiments on real-world datasets.  As there are no publicly available datasets for co-branding evaluation, we developed a Python-based web crawler using keywords (e.g., ``co-branding campaigns,'' ``brand collaboration cases,'' ``cross-category partnerships'') to collect 3,500 co-branding cases up to 2024 from SocialBeta \cite{SocialBeta}, a platform dedicated to brand marketing insights. We then categorized these cases by the types of brands involved, constructing three real-world datasets corresponding to the initiators of the co-branding efforts, namely “\textit{diet, apparel, and IP-themed}” brands. Each dataset comprises multiple categories of potential partner brands and their historical co-branding frequencies; due to commercial protection policies, these categories represent general groupings rather than specific brands. We then establish a baseline affinity \( \nu_{(u,v)} \) for each pair of sub-brand \(u\) and target brand \(v\) by calculating the proportion of co-branding occurrences relative to the total cases, and we incorporate standardized market values from a cloud-native data catalog platform, dataworld\footnote{https://data.world/, 2025}, to quantify the market gain \( \bg \). In the Diet dataset, the set of initiating brands $\cU$ spans 10 categories totaling 269 entities (with category sizes of 32, 18, 22, 38, 25, 36, 21, 19, 28, and 30) and the set of target brands  $\cV$ covers 12 categories totaling 608 entities (64, 56, 52, 44, 58, 49, 72, 40, 38, 42, 28, and 65). The Apparel dataset has $\cU$ split into 12 categories totaling 192 entities (22, 14, 16, 24, 18, 20, 15, 12, 9, 14, 11, 17) and $\cV$ split into 10 categories totaling 471 entities (64, 56, 52, 44, 58, 49, 40, 38, 42, 28). The IP-themed dataset has $\cU$ with 8 categories (17, 24, 26, 12, 18, 27, 23, 14) totaling 161 entities and $\cV$ with 8 categories (64, 56, 72, 40, 38, 42, 28, 65) totaling 405 entities.
We then apply the same procedures as used for the synthetic dataset.
For more details and visualization on the three real-world datasets, please refer to \Cref{app:experiment}.

\textbf{Other Setups.} 
We set the historical dataset \( \mathcal{D} \) to cover the past \( D = 50 \) seasons, similar to \cite{xu2021dual}. The initial values of \( \bmu, \bg, T_{t,e,s}, T'_{t,v} \) are determined by averaging over this dataset. For the synthetic dataset and the three real-world datasets, we define the base budget \( B_0 \) as one-hundredth of the market revenue observed in each dataset's respective historical dataset \( \mathcal{D} \). The total budget \( B \) is then set to be ten times \( B_0 \) to ensure the profitability of the co-branding campaign. Similarly, the budget cap for any sub-brand \( u \), denoted as \( c_u \), is set proportional to its market share, which is defined as the ratio of brand \( u \)'s revenue to the total market revenue in its historical dataset \( \mathcal{D} \), scaled by twice the total budget \( B \). Since the total budget caps exceed $B$, some brands may not receive their full allocation \( c_u \). 
The budget allocation is chosen from a predefined set of tentative spending plans \( \cN_u \), consisting of three candidate levels: low, medium, and high. Specifically, each sub-brand $u$ can be allocated one of three budget values: \( \left\lfloor\frac{1}{3}c_u \right\rfloor \), \( \left\lfloor\frac{2}{3}c_u \right\rfloor \), and \( c_u \). We determine \( S \) by sorting the predefined levels in \( \cN_u \) in increasing order under a given \( \bb \).
Additionally, we conduct exploratory experiments on different  \( B \), \( c_u \), and \( \cN_u \).

\subsection{Evaluation Results}

\textbf{Co-branding Online Performance (RQ1).} We test \Cref{alg:BCUCB-T} on the received market gain of co-branding pairs, as in  \cref{eq:reward}.
The evaluation spans \(T = 2,000\) rounds, with results shown in \Cref{fig:all_reward}.
Across all datasets, our algorithm consistently outperforms the baselines, achieving the fastest convergence and improving the online received revenue by 12\% to 73\% compared to the baselines.

\begin{figure}[!t]
    \centering
    \includegraphics[width=8.5cm]{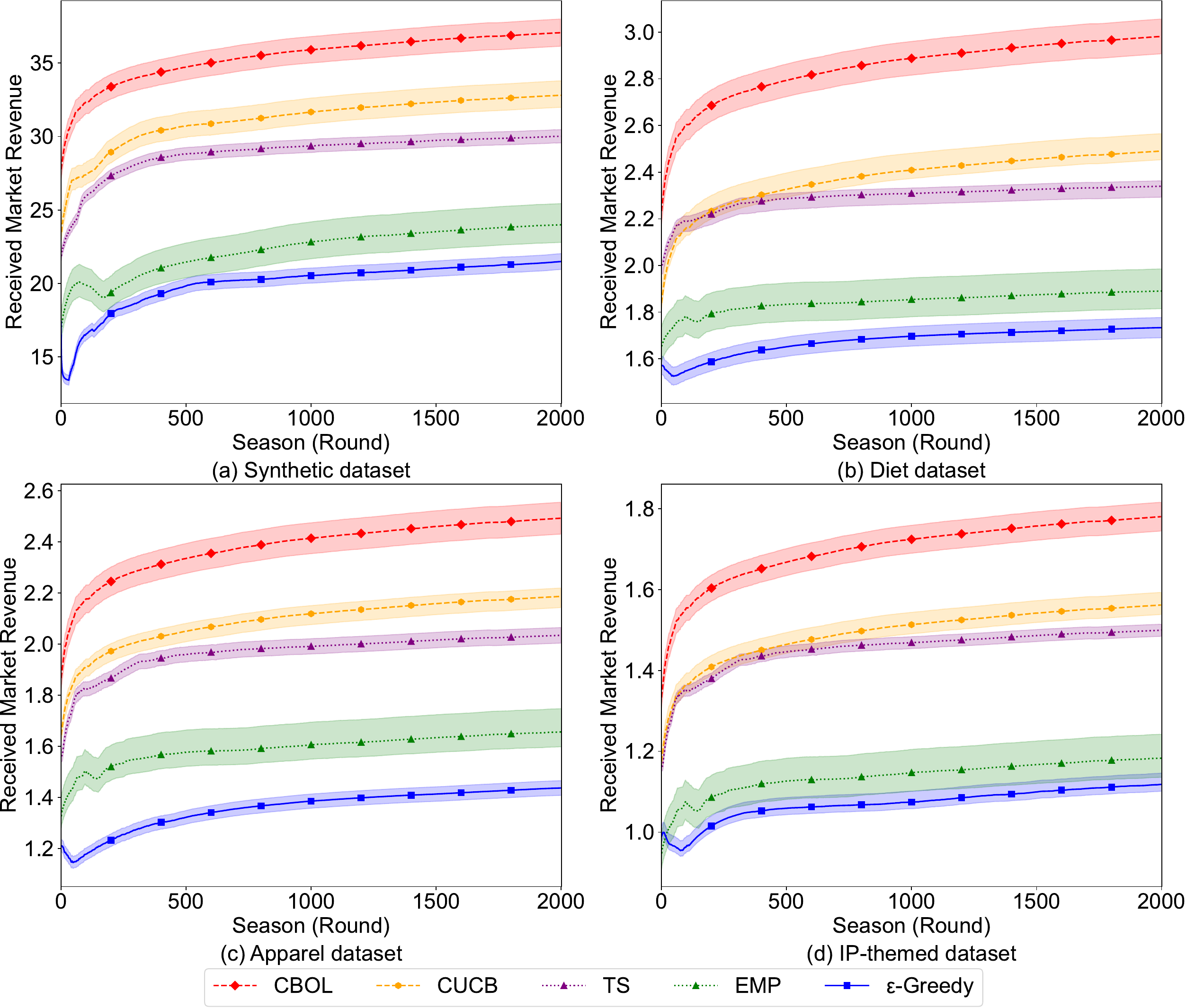}
    \vspace{-0.25in}
    \caption{Comparison of average received market revenue.}\label{fig:all_reward}
    \Description{Comparison of average reward.}
    \vspace{-0.05in}
\end{figure}

\textbf{Offline Budget Allocation (RQ2).} We evaluate \Cref{alg: begreedy enum} across different total budgets $B$ to assess whether our proposed GPE algorithm enhances performance across multiple sub-brands. As shown in \Cref{fig:offline}, GPE outperforms three baselines GPO, PROP-S, and PROP-W in revenue acquisition within the same $\cG$, achieving improvements of 13\%, 42\%, and 29\%, respectively, showing its effectiveness from a holistic parent brand perspective.

\textbf{Tradeoff Between Performance and Computational Cost (RQ2 \& RQ3).} 
 We test the impact of different $K$ (see line \ref{line: enumeration} of GPE). As shown in \Cref{fig:K}, when \( K \) increases from 1 to 3, the number of possible initial solutions expands, leading to a significant increase in the co-branding revenue. However, when \( K \) reaches or exceeds 4, the revenue gain becomes relatively marginal. On the other hand, \Cref{tab:example_table} shows that the average running time of different budget settings for \( K = 4 \) and \( K = 5 \) increases by more than a thousand times compared to \( K = 1 \)  which aligns with our theoretical analysis.

  \begin{figure}[th]
    \centering
    \includegraphics[width=0.44\textwidth]{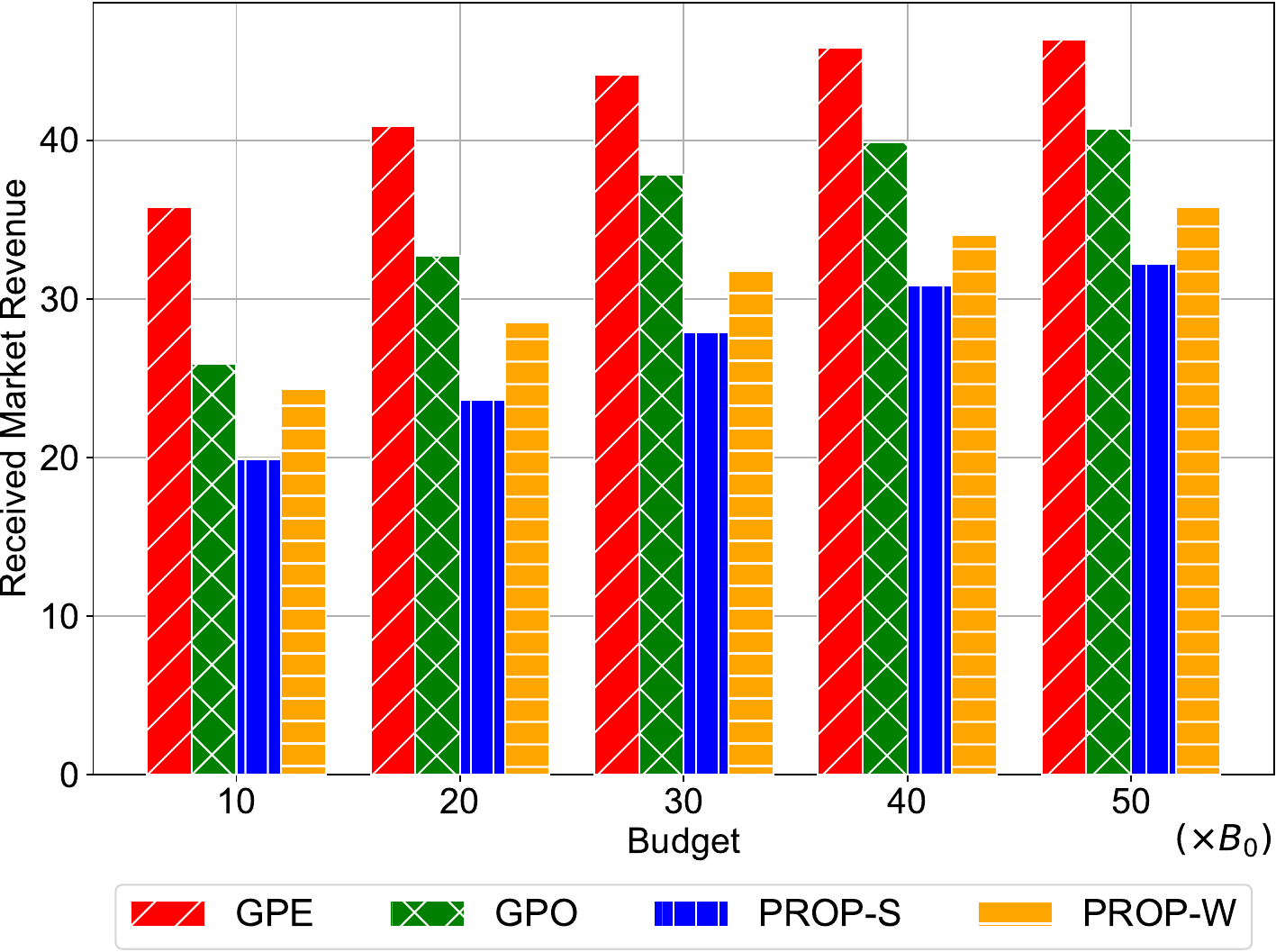}
    \vspace{-0.1in}
    \caption{Comparison of offline optimization across different total budgets on Synthetic dataset.}\label{fig:offline}
    \Description{Comparison of average reward.}
    \vspace{-0.05in}
\end{figure}

\begin{figure}[!thb]
    \centering
 \begin{subfigure}[b]{0.234\textwidth}
        \includegraphics[width=\textwidth]{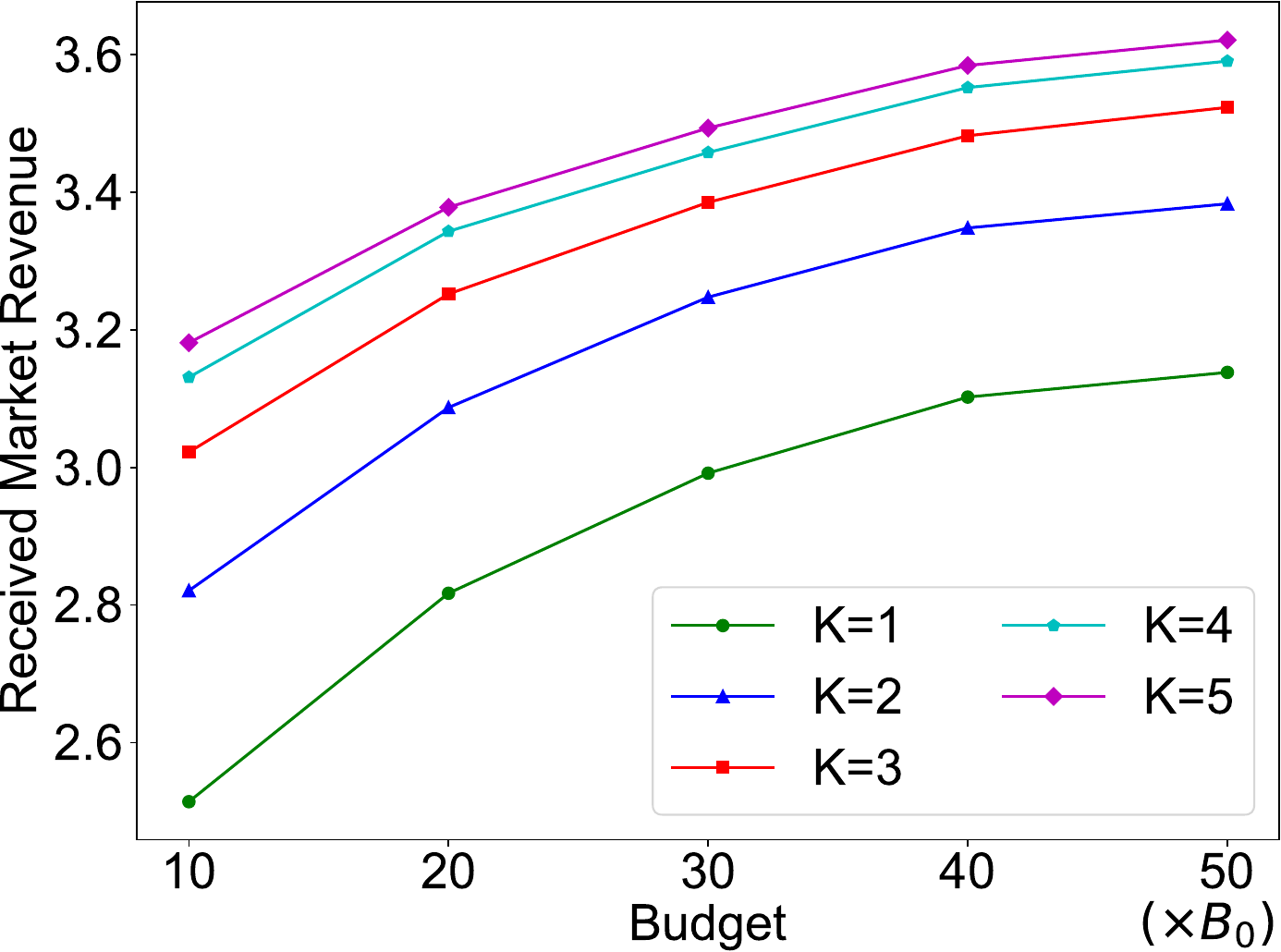}
        \caption{Diet Dataset} 
    \end{subfigure}
    \begin{subfigure}[b]{0.234\textwidth}
        \includegraphics[width=\textwidth]{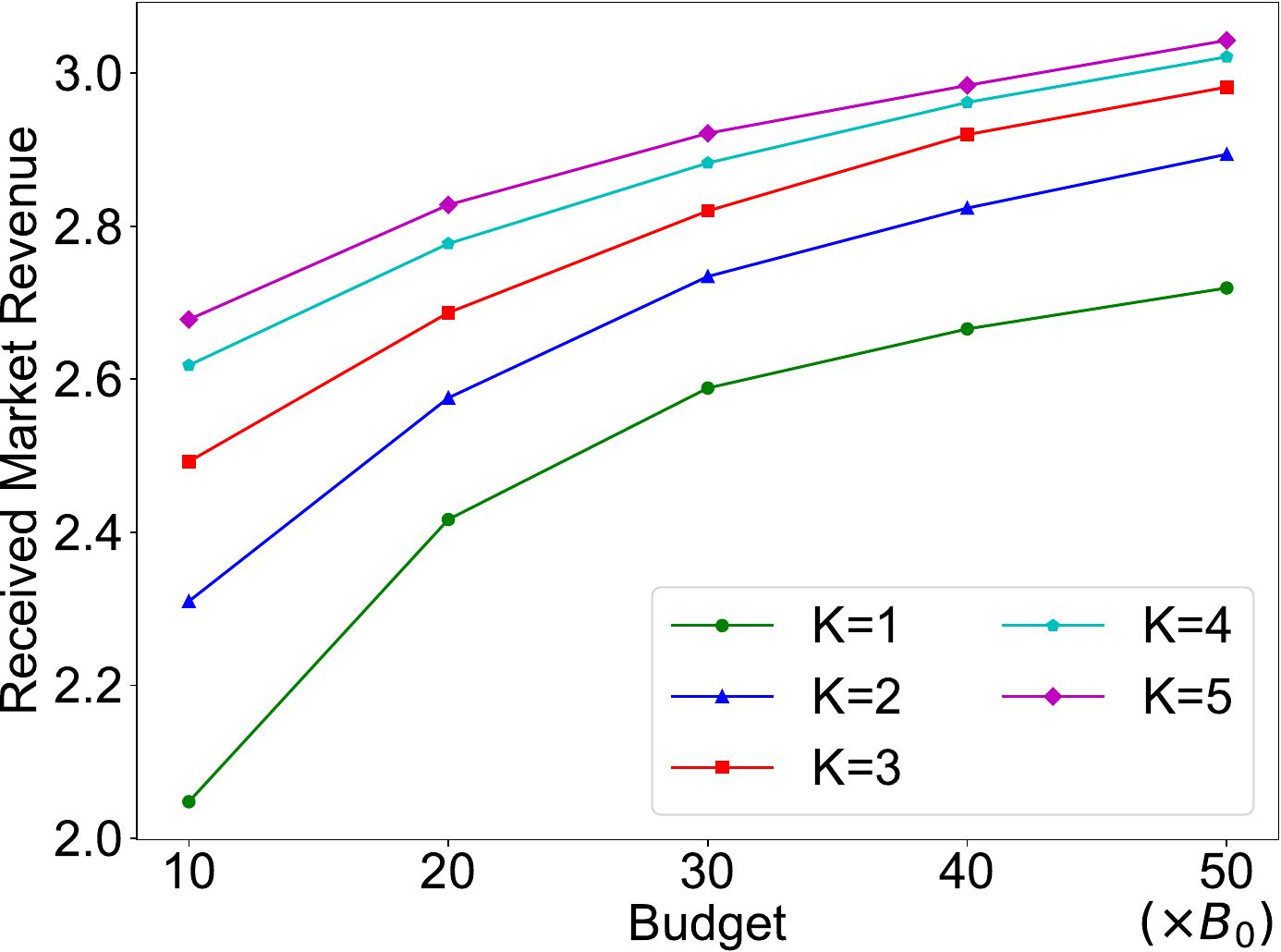}
        \caption{Apparel Dataset}
    \end{subfigure}
    \vspace{-0.1in}
    \caption{Influence of operational constraint $K$ under different $B$ on Diet and Apparel datasets.}
    \label{fig:K}
    \vspace{-0.1in}
\end{figure}

\begin{table}[!thb]
  \centering
  \caption{Computational Influence of $K$ on Diet dataset.}
  \label{tab:example_table}
  \vspace{-0.1in}
  {\resizebox{0.47\textwidth}{!}{
  \begin{tabular}{cccccc} 
    \toprule
    Value of $K$ & 1 & 2 & 3 & 4 & 5 \\
    \midrule 
   Average running time (s) & 0.1304 & 0.3214 & 0.8921 & 1.9052 & 3.5116 \\
Average reward & 2.9126 & 3.1772 & 3.3329 & 3.4148 & 3.4515 \\
   Increase of running time  & 0 & 146.1\% & 584.0\% & 1361.6\% & 2592.7\% \\
    \bottomrule 
  \end{tabular}}}
  \vspace{-0.05in}
\end{table}

\begin{figure}[!thb]
  \centering
   \begin{minipage}[b]{0.235\textwidth}
    \centering
    \includegraphics[width=\textwidth]{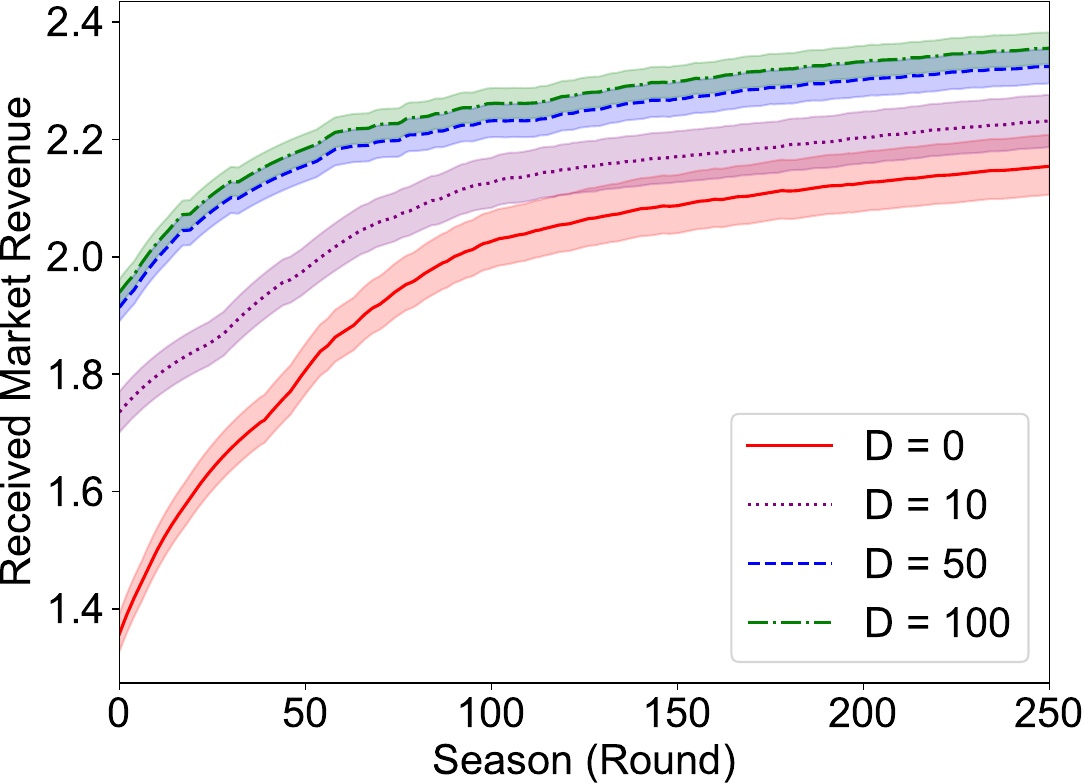}
    \vspace{-0.27in}
    \captionof{figure}{Influence of varying size $D$ on Apparel dataset.}\label{fig:histry}
  \end{minipage}
  \begin{minipage}[b]{0.235\textwidth}
    \centering
    \includegraphics[width=\textwidth]{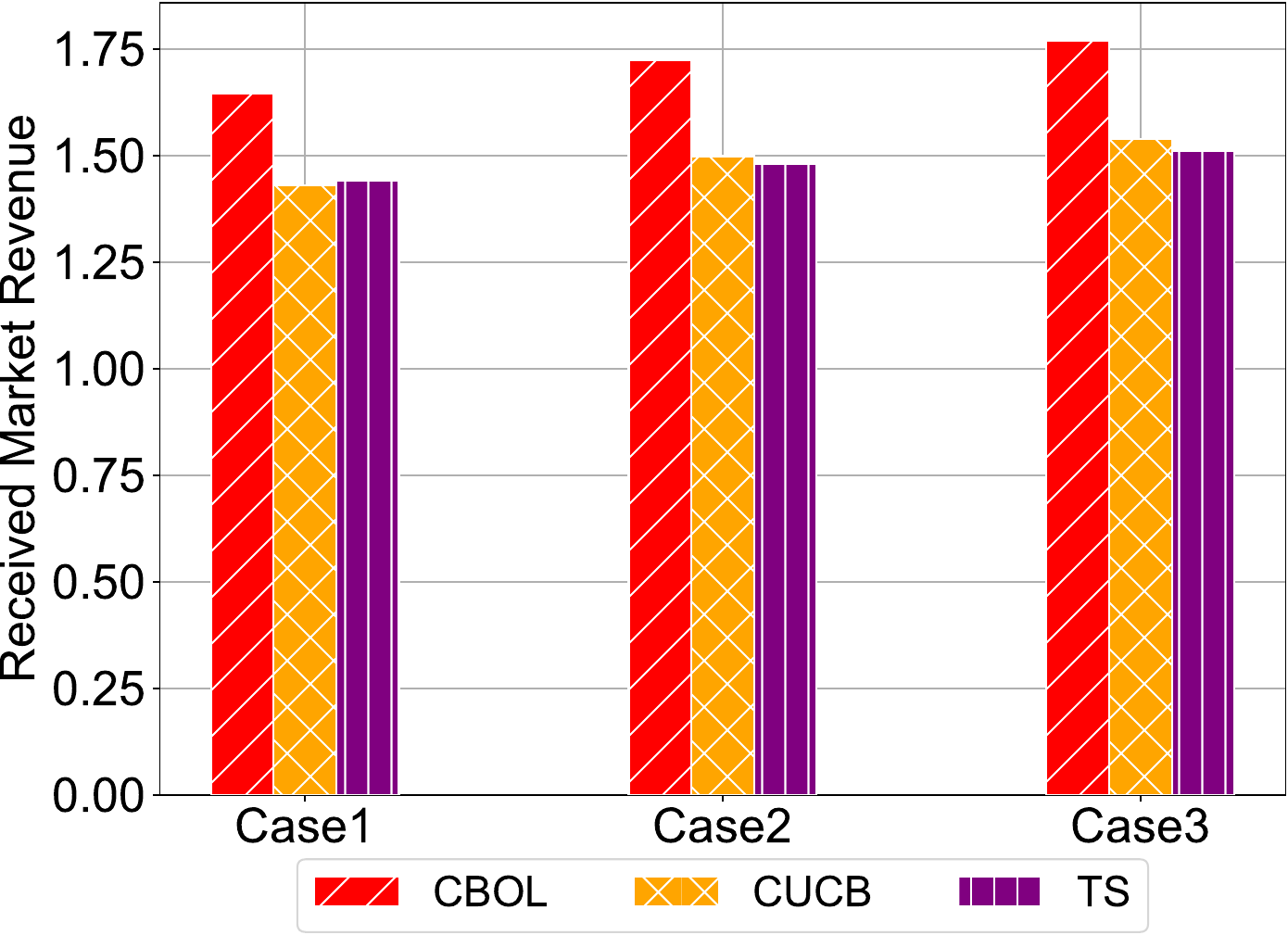}
    \vspace{-0.27in}
    \captionof{figure}{ Sensitivity analysis on IP-themed dataset.}\label{fig:case}
  \end{minipage}\hfill
  \vspace{-0.05in}
\end{figure}

\textbf{Impact of Historical Dataset (RQ1).} We test the impact of historical dataset $\cD$ of various sizes $D$ on Apparel dataset. As shown in \Cref{fig:histry}, the influence of $\cD$ is particularly significant in the early stages of online co-branding seasons compared to $D=0$. This highlights its advantage in ensuring short-term algorithmic performance, thus avoiding the performance loss typically associated with the early exploration phase in traditional MAB methods.

\textbf{Ablation Study (RQ3)}. We evaluate the performance on IP-themed dataset with different cases. Specifically, in Case 1, the budget cap \(c_u\) for each brand \(u\) is set to twice the total budget \(B\), scaled by \(u\)'s market share, and the tentative spending plan \(\cN'\) consists of two levels: \(\lfloor c_u \rfloor\) and \(\lfloor c_u/2\rfloor\), with \(T=500\). In Case 2, \(c_u\) is increased to three times \(B\), and \(\cN'\) comprises three levels: \(\lfloor c_u\rfloor\), \(\lfloor 2c_u/3\rfloor\), and \(\lfloor c_u/3\rfloor\), with \(T=1000\). In Case 3, \(c_u\) is increased to four times \(B\), and \(\cN'\) includes four levels: \(\lfloor c_u\rfloor\), \(\lfloor 3c_u/4\rfloor\), \(\lfloor c_u/2\rfloor\), and \(\lfloor c_u/4\rfloor\), with \(T=1500\). We use the two best-performing algorithms from \Cref{fig:all_reward} as baselines, and \Cref{fig:case} shows that \Cref{alg:BCUCB-T} consistently achieves the best performance, confirming its robustness.

\section{Conclusion}\label{sec:conclusion}
We presented the first unified online-offline framework for the co-branding problem, systematically tackling challenges such as resource imbalance, uncertain brand willingness, and evolving market trends. Our approach online dynamically updates a co-branding bipartite graph model to balance the exploration of new collaborations with the exploitation of proven ones, while minimizing the high initial costs often associated with co-branding. An offline optimization phase further refines sub-brand budget allocations to maximize overall returns. Through rigorous and highly non-trivial theoretical analysis, we provided regret bounds for the online learning component and approximation guarantees for the offline allocation strategy, offering interpretability and clarity regarding the underlying mechanisms. Empirical results on synthetic and real-world datasets highlight our framework’s effectiveness for guiding co-branding strategies. Future work will focus on refining the model to handle even more complex market environments.

\section*{Acknowledgement}
The work of Jinhang Zuo was supported by CityUHK 9610706.  The work of Xutong Liu was partially supported by a fellowship award from the Research Grants Council of the Hong Kong Special Administrative Region, China (CUHK PDFS2324-4S04). 
The work of John C.S. Lui was supported in part by the RGC GRF-14215722.

\balance

\clearpage
\appendix

\section{Model Extensions}\label{app:model_dis}

\subsection{Incorporating Sub-Brand Market Gains}

\begin{figure}[t]
    \centering
    \includegraphics[width=0.45\textwidth]{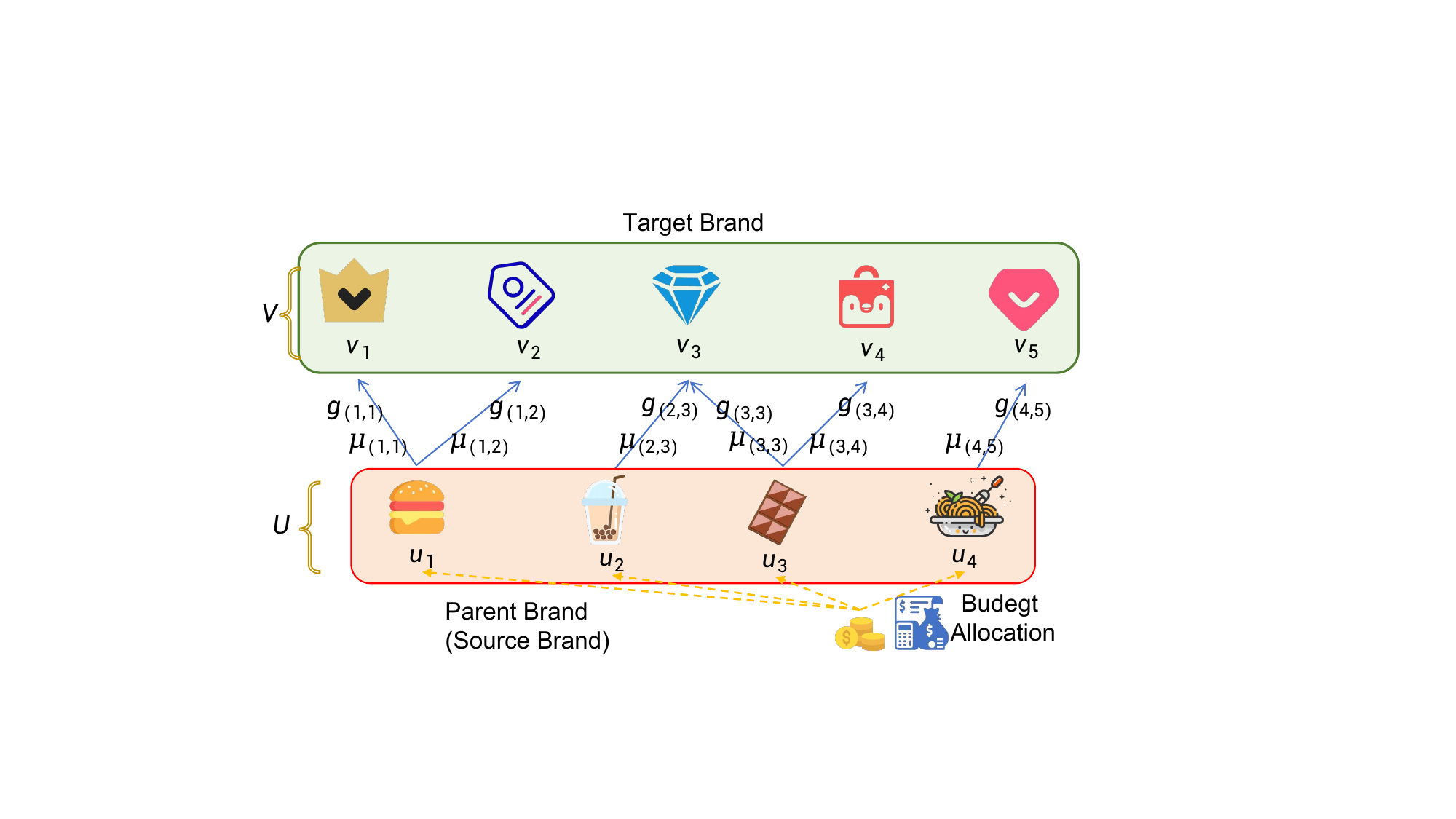}
    \caption{Extended Bipartite Graph Model for Co-Branding Incorporating Sub-Brand Market Gains.}
    \label{fig:new_model}
\end{figure}

In the main text, our model considers brand market gain as the impact of the target brand on the overall profitability of the parent company's brand portfolio, without distinguishing specific sub-brands. Here, we extend this concept to analyze the influence of the target brand on individual sub-brands. Specifically,  
as shown in \cref{fig:new_model},
the graph \( \cG \) incorporates the brand market gain vector \( \bg = \{g_e\}_{e \in \cG} \), representing the revenue earned by the parent brand from co-branding collaborations. Each element \( g_e \in \bg \) denotes the consumer market gain for partner brand pair \( e \). These gains are initially unknown and depend on the parent brand's financial investments in marketing and promotion. Additionally, for each successful co-branding pair \(e:= (u, v) \) , i.e., $X_{t,e}=1$, the  market gain  \( Y_{t,e} \) is observed based on the budget allocation $\bb$ for \( e\) during season \( t \). The expected value of \( Y_{t,e} \) is the brand market gain \( g_e \), quantifying the economic benefits of collaboration. This feedback component is observable only after successful co-branding actions.
We denote  the set of successful co-branding pair as $S'_t \in S_t$, with corresponding brand market gain feedback denoted as $\boldsymbol{Y}_{t,S'_t} = (Y_{t,1}, \ldots, Y_{t,|S'_t|})$.

Therefore, when action \( S_t \) is determined under a given budget allocation, the decision-maker receives reward  \( R_{\cG}(\bb_t) \) determined by both co-branding success probability and market gain:
$ R_{\cG}(\bb_t)=\sum_{e \in S_t}\I\{\exists\, e \in S_t \text{ s.t. } X_{t, e}=1\}Y_{t,e}.$
Given the definitions of \( \bmu \) and \( \bg \), the expected reward  $r_{\cG}(\bb)$  is given by: $r_{\cG}(\bb) = \sum_{e \in S} g_e \mu_{e}.$
This formulation reveals a linear summation. Consequently, our algorithm remains applicable to this simpler model and its formulation. Specifically, the elements of $\cA'$ are modified to include
$e \in \cA'$.
Lines \ref{line: extend over online ucb for nodes} and \ref{line: update weight} in Algorithm \ref{alg:BCUCB-T} are updated as follows: $\rho'_{e} \leftarrow \sqrt{3\ln t/(2T'_{t,e})}$, $\hat{g}_{e} \leftarrow \hat{g}_{e} + \rho'_{e}$ and 
 $T'_{e} \leftarrow T'_{t,e} + 1$,  $\hat{g}_e \leftarrow \hat{g}_e  + (Y_e - \hat{g}_e )/T'_{t,e}$, 
	   $\hat{V}'_{e}\leftarrow \frac{T_{t,e}'-1}{T_{t,e}'}\left(\hat{V}_{e}' + \frac{1}{T_{t,e}'} \left(\hat{g}_{e} - Y_{t,e}\right)^2\right)$, respectively.
These updates allow the algorithm to incorporate brand market gains effectively in this extended scenario.

\subsection{Target Brand Exclusivity}

We consider scenarios where multiple sub-brands \( u \) under the same parent company \( \mathcal{U} \) can collaborate with any target brand \( v \in \mathcal{V} \) without exclusivity agreements in the main text. This implies that a target brand \( v \) is not restricted to partnering with a single specific sub-brand from \( \mathcal{U} \). For example,
target brand \( v_1 \) can collaborate with both \( u_1 \) and \( u_3 \), without violating exclusivity agreements.
In contrast, we also consider a simplified yet commonly encountered case where each target brand is limited to partnering with a single sub-brand from a given parent company.  In this context, the reward function can be decomposed into separate components on each source sub-brand, and the expected reward can be expressed as: $ r_{\cG} (\bb)=\sum_{u\in\cU}\sum_{v \in \cV_u}g_u \mu_{u,v,b_u},$
where \( \mathcal{V}_u \) represents the set of target brands associated with the \( u \)-th sub-brand.
In the absence of exclusivity agreements, multiple base arms (i.e., co-branding pairs) may share resources or be influenced by the same target brand's market performance. This overlap introduces complexities in coordination and estimation, as the interactions between different co-branding pairs become interdependent.
Conversely, when a target brand can collaborate with only one sub-brand from a parent brand, the marginal gains of each base arm become independent. This independence eliminates the need for complex joint distribution modeling, thereby simplifying the budget allocation process. 
Under this exclusivity constraint, classical dynamic programming techniques \cite{goda2023dynamic,nemhauser1969discrete} can be employed to achieve the optimal offline budget allocation, ensuring an approximation ratio of \( \alpha = 1 \).

\subsection{Modeling Budget Effects on Market Gains}
In \Cref{sec:model}, we model the brand market gain vector \( \bg = \{g_v\}_{v \in \cV} \) and primarily considered that the market gain is mainly driven by consumer preferences rather than budget allocation. This is because in a complex market environment, a higher investment does not necessarily imply higher returns. Therefore, in our original model, we did not directly associate \( \bg \) with the budget allocation vector \( \bb \).
In fact, we can also construct a market gain model that considers the impact of budget allocation. Similar to the handling of the co-branding probability \( \mu_{(u,v),b_u} \), for any \( v \in \cV \) we can extend \( g_v \) to a form that depends on the budget allocation, denoted as \( g_{v,b_u} \).

Under this extension, the complexity of the model is significantly increased, and the arm space \( \cA' \) required for online learning becomes correspondingly larger. However, our algorithm can still handle this situation effectively. Specifically, in at line \ref{line: extend over online ucb for nodes} of \Cref{alg:BCUCB-T}, we need to modify the update rule as follows. For each \( (v, s) \in \cA' \), update the confidence radius as follows $\rho'_{v,s}$:
\begin{equation}
    \rho'_{v,s} \leftarrow \sqrt{\frac{6\hat{V}_{v,s} \log t}{T_{t,v,s}}} + \frac{9 \log t}{T_{t,v,s}},
\end{equation}
and then update the UCB-based estimate of the market gain:
\begin{equation}
    \hat{g}_{v,s} \leftarrow \hat{g}_{v,s} + \rho'_{v,s}.
\end{equation}
This way, not only is the model structurally more flexible, but the algorithm is also capable of effectively performing online learning in a larger exploration space.

\section{Proof Appendix}\label{app: proof}
We begin by introducing some definitions. 
Recall that $\cA \cup \cA'$ represents the set of all base arms. Without loss of generality, assume $X_{t,e} \in [0,1], Y_{t,v} \in [0,1]$ for any $t$.  Following the notation in \cite{wang2017improving} and with slight abuse of notation, we use $X_a \in [0,1]$ to consistently represent both $X_{e}$ and $Y_v$ as 
the random outcome of base arm $a \in \cA\cup\cA'$. Define $D$ as the unknown joint distribution of random outcomes $\bm{X}=(X_a)_{a\in \cA \cup \cA'}$ with unknown mean $\mu_a=\E_{\bm{X}\sim D}[X_a]$. The probability that a budget allocation $\bb$ triggers arm $a$ (i.e. $X_a$ is observed) under the unknown environment distribution $D$ is denoted as $p_a^{ \bb}$ the probability that budget allocation. For simplicity, we omit the dependency on $D$ when there is no ambiguity.
Specifically, $p_a^{\bb}=1$ for $a \in \cA$, $p_a^{\bb}=1-\prod_{e\in \cE}(1-\mu_{e,a,s_u})$ for $a \in \cA'$, and $p_a^{\bb}=0$ for the rest of base arms.
We define the reward gap $\Delta_{\bb}$ as $\max(0, \alpha r_{\cG}(\bb^*) - r_{\cG}(\bb))$ for all feasible budget allocation $\bb$. For each base arm $a$, 
we further define  $\Delta_{\min}^a{=}\min_{\Delta_{\bb}>0, b_u = s}\Delta_{\bb}$ and $\Delta_{\max}^a{=}\max_{\Delta_{\bb}>0, b_u=s}\Delta_{\bb}$. 
If there is no budget allocation $\bb$ with $b_u=s$ such that $\Delta_{\bb} {>} 0$, we adopt the convention: $\Delta_{\min}^a{=}\infty$ and $\Delta_{\max}^a{=}0$. 
Let $\Delta_{\min} {=} \min_{a \in \cA \cup \cA'}\Delta_{\min}^a$ and  $\Delta_{\max} {=} \max_{a \in \cA \cup \cA'}\Delta_{\max}^a$.
Define the set of observed base arms at $t$ as $\tau_t$, which includes both co-branding pairs with allocated budgets $\{(e,s)\in \cA| b_u=s\} $ and the corresponding arms in $\cA'$.

\subsection{Instance-Dependent Regret Analysis}\label{subapp:dependent}

\begin{lemma} \label{thm: over regret}
Algorithm~\ref{alg:BCUCB-T} has the following instance-dependent  regret bound   $O\bigg((NUV+V)$ $ \bigg(\frac{ \log (UV+V) }{\Delta^{a}_{\min}} $ $+ \log (\frac{UV+V
}{\Delta^{a}_{\min}})\bigg)\log T\bigg)$.
\end{lemma}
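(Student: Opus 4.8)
\textbf{Proof plan for Lemma~\ref{thm: over regret} (instance-dependent regret bound).}

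The plan is to follow the standard CMAB analysis pipeline of \cite{wang2017improving,liu2022batch}, but carefully accounting for the two nonstandard features of our setting: (i) the arm set is enlarged from $\cA$ to $\cA \cup \cA'$, so the market-gain estimation contributes additional regret, and (ii) the oracle GPE is only an $\alpha$-approximation with the Bernstein-type (variance-aware) confidence radius of \Cref{eq:confidence_interval}, and we discard the $D$ historical pulls when forming the radius. First I would invoke the standard event decomposition: define the ``nice event'' $\cN_t$ that for every observed base arm $a\in\tau_t$ the empirical mean lies within its confidence radius, i.e. $|\hat\mu_{a}-\mu_a|\le \rho_a$; by the Bernstein concentration inequality (as in \cite{liu2022batch}, Lemma for Bernstein arms) together with a union bound over the $T$ rounds and the at most $UV+V$ arms that can lie in $\tau_t$, the complement $\overline{\cN_t}$ contributes only an $O(NUV+V)$-type constant to the regret. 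On $\cN_t$, monotonicity and the non-decreasing-UCB construction (line~\ref{line: over online ucb}) guarantee $\tilde\mu\ge\mu$ coordinatewise, and combining this with the $\alpha$-approximation guarantee of GPE (\Cref{thm:offline}) and the boundedness/Lipschitz (1-norm TPM-style) smoothness of $r_\cG$ yields the key per-round bound $\Delta_{\bb_t}\le C\sum_{a\in\tau_t} p_a^{\bb_t}\,\rho_a$ for a suitable constant $C$ depending on $V$ and the smoothness coefficient.

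Next I would run the standard ``reverse amortization'' / triggering-probability-modulated (TPM) counting argument: partition, for each observed arm $a$, the rounds in which $a$ is counted according to geometric groups of its pull count $T_{t,a}$, and use the fact that the variance-aware radius $\rho_a=\sqrt{6\hat V_a\log t / T_{t,a}}+9\log t/T_{t,a}$ decays like $T_{t,a}^{-1/2}$ (first term) plus $T_{t,a}^{-1}$ (second term). Summing the contribution $\Delta_{\bb_t}$ against the reward gap using the $\Delta_{\min}^a$-based bucketing (exactly the technique that produces the $\frac{\log(\cdot)}{\Delta_{\min}^a}$ and $\log(\frac{\cdot}{\Delta_{\min}^a})$ shapes) gives, per arm, a contribution of order $\big(\frac{\log(UV+V)}{\Delta_{\min}^a}+\log(\frac{UV+V}{\Delta_{\min}^a})\big)\log T$; here the first summand comes from the dominant $\sqrt{\cdot}$ term of the radius and the second, lower-order summand from the $1/T_{t,a}$ term, and the $\log(UV+V)$ factor is the union-bound cost reflecting the reduced size $|\tau_t|\le UV+V$ of the observed-arm set. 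Multiplying by the number of relevant arms, which is $|\cA\cup\cA'| = NUV+V$, and absorbing the $\overline{\cN_t}$ constant yields the claimed $O\!\big((NUV+V)\big(\frac{\log(UV+V)}{\Delta_{\min}^a}+\log(\frac{UV+V}{\Delta_{\min}^a})\big)\log T\big)$ bound.

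The two steps I expect to be the main obstacles are: first, correctly propagating the $\alpha$-approximation through the regret chain — because GPE only returns an $\alpha$-approximate maximizer of the \emph{optimistic} objective $r_{\hat\cG^{\text{UCB}}}$, one must argue $r_{\hat\cG^{\text{UCB}}}(\bb_t)\ge \alpha\, r_{\hat\cG^{\text{UCB}}}(\bb^*)\ge \alpha\, r_\cG(\bb^*)$ on $\cN_t$, and then bound $\alpha r_\cG(\bb^*) - r_\cG(\bb_t)\le r_{\hat\cG^{\text{UCB}}}(\bb_t)-r_\cG(\bb_t)$, which requires the monotone-submodular structure of \Cref{thm: lattice submodular} and a TPM-smoothness bound for $r_\cG$ as a function of the arm means restricted to the observed set $\tau_t$; second, handling the \emph{coupled} market-gain arms in $\cA'$ whose triggering probabilities $p_a^{\bb}=1-\prod_{e}(1-\mu_{e,a,s_u})$ themselves depend on the (estimated) co-branding probabilities — one must show the joint error decomposes cleanly across $\cA$ and $\cA'$, i.e. that the error from an underestimated $p_a$ is itself controlled by the $\cA$-arm radii, so the two error sources merge additively rather than multiplicatively. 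Once these are in place, the remaining summations are the routine geometric-grouping calculations and I would relegate them to the appendix.
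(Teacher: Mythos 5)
Your plan follows essentially the same route as the paper's proof: the paper likewise runs the variance-aware TPM-CMAB analysis of Liu et al.\ (2022), invoking its event decomposition (which packages your ``nice event'' and per-round smoothness bound into the filtered events $E_{t,1},E_{t,2}$) together with its regret-allocation/reverse-amortization lemma, using the reduced observed set $\tau_t$ of size $UV+V$, the enlarged arm set $\cA\cup\cA'$ of size $NUV+V$, and collapsing the historical pulls into a single pull that contributes only a time-independent constant. The only small slip is your per-round bound $\Delta_{\bb_t}\le C\sum_{a\in\tau_t}p_a^{\bb_t}\rho_a$: one should either sum the radii over the triggerable arms weighted by $p_a^{\bb_t}$ or, after the tower rule, over the observed arms $\tau_t$ unweighted, not both at once.
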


\begin{proof}
Recall that \(S = \{(e,s) \in \cA \mid s = b_{u},s\in \cN_u\}\) represents the set of co-branding pairings under a specific budget allocation. Let \(\tilde{S}\) denote the set of base arms that can be triggered by \(S\), i.e., \(\{a \in \cA \cup \cA' \mid p_a^{ \bb} > 0\}\). 
Using the Lemma 9 from \cite{liu2022batch},
we decompose the regret $Reg(T)$ into two event-filtered components, $Reg(T, E_{t,1})$ and $Reg(T,E_{t,2})$:
$Reg(T) \le Reg(T, E_{t,1})+Reg(T, E_{t,2}),$
where event $E_{t,1}=\{\Delta_{\bb_t} \le 2e_{t,1}(\bb_t)\}$, event $E_{t,2}=\{\Delta_{\bb_t} \le 2e_{t,2}(\bb_t)\}$, with error terms defined as $e_{t,1}(\bb_t)=4\sqrt{3.75V}p_{a}^{\bb_t}\sqrt{\sum_{a\in \tilde{S}_t}(\frac{\log t}{T_{t-1,a}}\wedge \frac{1}{28})}$, $e_{t,2}(\bb_t)=28\sum_{a\in \tilde{S}_t}p_{a}^{\bb_t}(\frac{\log t}{T_{t-1,a}}\wedge \frac{1}{28})$.   Since the confidence radius for each base arm does not incorporate feedback from the online market, we treat all past selections of a base arm as a single pull. The feedback for this single selection can be reviewed as the average reward over all historical pulls. Consequently, regardless of the distribution of the finite historical data $\cD$, the regret from historical arm selections is bounded by a time-independent constant.

For $Reg(T, E_{t,1})$, denote $\tilde{\Delta}_{t,\bb_t}=\frac{\Delta_{t,\bb_t}}{p_a^{\bb_t}}$ for budget allocation $\bb_t$ at $t$, and $\tilde{\Delta}_{a}^{\min}=\min_{\bb : p_a^{\bb}>0, \Delta_{t,\bb}>0}\Delta_{t,\bb}/p_a^{\bb}$. Regarding $ \Delta_{\bb_t}$ on the selected budget allocation $\bb_t$ at  $t$, we have: $\Delta_{\bb_t} \le \sum_{a\in \cA \bigcup \cA'} $ $ \frac{4(4\sqrt{3})^2 V (p_{a}^{\bb_t})^2\frac{\log t}{T_{t-1,a}}}{\Delta_{\bb_t}}$ $\le \sum_{a\in \cA \bigcup \cA'}$ $ p_a^{\bb_t} \left(\frac{384 V \frac{\log t}{T_{t-1,a}}}{\tilde{\Delta}_{\bb_t}}-\frac{\tilde{\Delta}_{\bb_t}}{UV+V}\right).
$
Using a modified version of Lemma 10 in \cite{liu2022batch}, we define a regret allocation function as follows:
$
\kappa_{a,T}(\ell) = 
\frac{48 V} {\tilde{\Delta}_{a}^{\min}} $ $ \text{for $\ell=0$;}$ $
2\sqrt{\frac{192 V \log T}{\ell }}$ $ \text{for $1\le\ell\le \frac{192 V\log T}{(\tilde{\Delta}_{a}^{\min})^2}$;}$ $
\frac{384 V \log T}{\ell\tilde{\Delta}_{a}^{\min} }$ $ \text{for $\frac{192 V\log T}{(\tilde{\Delta}_{a}^{\min})^2} < \ell \le \frac{384 V^2(U+1)\log T}{(\tilde{\Delta}_{a}^{\min})^2}$;}$ $
0 $ $ \text{otherwise.}
$
Using $  \Delta_{\bb_t} =\E_t[\Delta_{\bb_t}] $,  we  obtain:
$\Delta_{\bb_t} =\E_t[\Delta_{\bb_t}]   $ $\le  \E_t\bigg[\sum_{a\in \cA \bigcup \cA'}(\frac{ 384  V \frac{\log t}{T_{t-1,a}}}{\tilde{\Delta}_{\bb_t}}-$ $\frac{\tilde{\Delta}_{\bb_t}}{UV+V})p_a^{\bb_t}\bigg]$ $\le \E_t\left[\sum_{a \in \tau_t}{ \kappa_{a,T}(T_{t-1,a})}\right].
$
By the tower rule and the fact that $T_{t-1,a}$ is increased by $1$ if and only if $a \in \tau_t$, we can derive:  
$ \E\left[\sum_{t\in [T]}\E_t\left[\sum_{a \in \tau_t}{ \kappa_{a,T}(T_{t-1,a})}\right]\right] $ $=\E\left[\sum_{a\in \cA \bigcup \cA'}\sum_{j=0}^{T_{T-1,a}}\kappa_{a,T}(j)\right].$
 Applying  the definition of $\kappa_{a,T}(\ell)$, the expected regret  
$ Reg(T, E_{t,1})= \E\left[\sum_{t=1}^T\Delta_{\bb_t}\right]$ under event $E_{t,1}$ is bounded as:
$
Reg(T, E_{t,1}) \le$ $ \sum_{a\in \cA \bigcup \cA'}\bigg(\frac{48  V }{\tDelta_{a}^{\min}}+ $ $ \frac{384  V \log T}{\tDelta_{a}^{\min} } (1+\log (UV+V)) +$ $   \frac{768  V \log T}{\tDelta_{a}^{\min} } \bigg).    
$Here, $|\cA|= NUV, |\cA'|=V$. 

In the analysis of $Reg(T,E_{t,2})$ under event $E_{t,2}$, similar to that of $Reg(T,E_{t,1})$, we can derive:
$ \Delta_{\bb_t} \le \sum_{a\in \tilde{S}_t} 56 p_{a}^{\bb_t} \min\left\{1/28, \frac{\log T}{T_{t-1,a}}\right\} $ $ \le -\Delta_{\bb_t} + 2\sum_{a\in \tilde{S}_t} 56 p_{a}^{\bb_t} \min\left\{1/28, \frac{\log T}{T_{t-1,a}}\right\}$ $ \le \sum_{a\in \cA \bigcup \cA'} $ $p_a^{\bb_t} $ $\bigg(-\frac{\Delta_{\bb_t}}{UV+V}$ $+ 112  \min\left\{1/28, \frac{\log T}{T_{t-1,a}}\right\}\bigg).
$ Different from \cite{liu2022batch} that using $p_a^{\bb_t}=\E_t[\I\{a \in \tau_t\}]$, we directly establish the probability equivalence: $
\E_t\bigg[\sum_{a\in \cA \bigcup \cA'}p_a^{\bb_t}$ $ \left(-\frac{\Delta_{\bb_t}}{UV+V}+ 112  \min\left\{\frac{1}{28}, \frac{\log T}{T_{t-1,a}}\right\}\right)\bigg] 
= $ $\E_t\bigg[ \sum_{a \in \tau_t}$  $ \bigg(-\frac{\Delta_{\bb_t}}{UV+V}+ $ $ 112  \min\left\{\frac{1}{28}, \frac{\log T}{T_{t-1,a}}\right\}\bigg)\bigg]
$, which minimizes the observation randomness. It then follows that: $ \Delta_{\bb_t}   \le  \E_t\left[\sum_{a \in \tau_t}{ \kappa_{a,T}(T_{t-1,a})}\right],
$
where the regret allocation function $\kappa_{a,T}(\ell)$ is adjusted as:
$\kappa_{a,T}(\ell) = 
\Delta_{a}^{\max} \text{ if } 0 \leq \ell \leq \frac{112 \log T}{\Delta_{a}^{\max}};\; 
\frac{112 \log T}{\ell} \text{ if } \frac{112 \log T}{\Delta_{a}^{\max}} < \ell \leq \frac{112 (UV+V) \log T}{\Delta_a^{\min}};\;$ $ 
0 \text{ if } \ell > \frac{112 (UV+V) \log T}{\Delta_a^{\min}} + 1.$
Consequently, the expected regret $Reg(T, E_{t,2})$ under event $E_{t,2}$ can be bounded as:
$
    Reg(T, E_{t,2})
    \le \E\left[\sum_{t\in [T]}\E_t\left[\sum_{a \in \tau_t}{ \kappa_{a,T}(T_{t-1,a})}\right]\right]$ $=\E\left[\sum_{a\in \cA \cup \cA'}\sum_{s=0}^{T_{t-1,a}}\kappa_{a,T}(s)\right]$ $\le  \sum_{a\in \cA \bigcup \cA'} 112  \left(1+\log \left(\frac{(UV+V)\Delta_a^{\max}}{\Delta_{a}^{\min}}\right)\right)\log T$ $+UV \Delta_{\max}.
$
\end{proof}

\subsection{Proof of Theorem \ref{thm: over regret}}
Building on the instance-dependent result in Lemma \ref{thm: regret}, we now present the proof of the instance-independent regret bound in Theorem \ref{thm: over regret} for \Cref{alg:BCUCB-T}.
\begin{proof}
Since the minimum gap \( \Delta^{a}_{\min} \) can be extremely small in some instances, we derive regret bounds for \Cref{alg:BCUCB-T} that are independent of specific co-branding problem instances and do not directly rely on \( \Delta_{t,\bb} \). To achieve this, we introduce a fixed gap parameter \( \Delta \), the value of which will be specified later. Based on Lemma \ref{thm: regret}, We analyze the regret under two scenarios: \( \{\Delta_{t,\bb_t} \le \Delta\} \) and \( \{\Delta_{t,\bb_t} > \Delta\} \).
For the first scenario where \( \Delta_{t,\bb_t} \le \Delta \), the regret is trivially upper bounded by:
$Reg(T, \{\Delta_{t,\bb_t} \le \Delta\}) \le T\Delta $.
For the second scenario where \( \Delta_{t,\bb_t} > \Delta \),  we simplify the regret analysis by approximating all minimal gaps \( \Delta_a^{\min} \) with \( \Delta \). This results in the following bound:
$ Reg(T, \{\Delta_{t,\bb_t} > \Delta\}) \le O(
(NUV+V)\log T(
\frac{V}{\Delta} \log (UV+V) + \log (\frac{UV+V}{\Delta})).
$
To balance these two scenarios, we carefully choose \( \Delta \) as:
$ \Delta = \Theta (\sqrt{\frac{(NU+1)V^2\log T \log (UV+V)}{T}} + \frac{(NUV+V)\log (UV+V)\log T}{T})$. By substituting this choice of $\Delta$, we derive a new regret bound for \Cref{alg:BCUCB-T} that is independent of the variations in gaps:
$
    Reg(T) \le O ( V\sqrt{(NU+1)(\log (U+1)V)  T\log T} + \log((U+1)VT)\log T ). 
$
This formulation ensures that the regret bound does not depend on specific gap values, making it more robust to variations in different co-branding instances.
\end{proof}

\begin{figure*}[!htb]
    \centering
    \begin{subfigure}[b]{0.283\linewidth}
        \includegraphics[width=\linewidth]{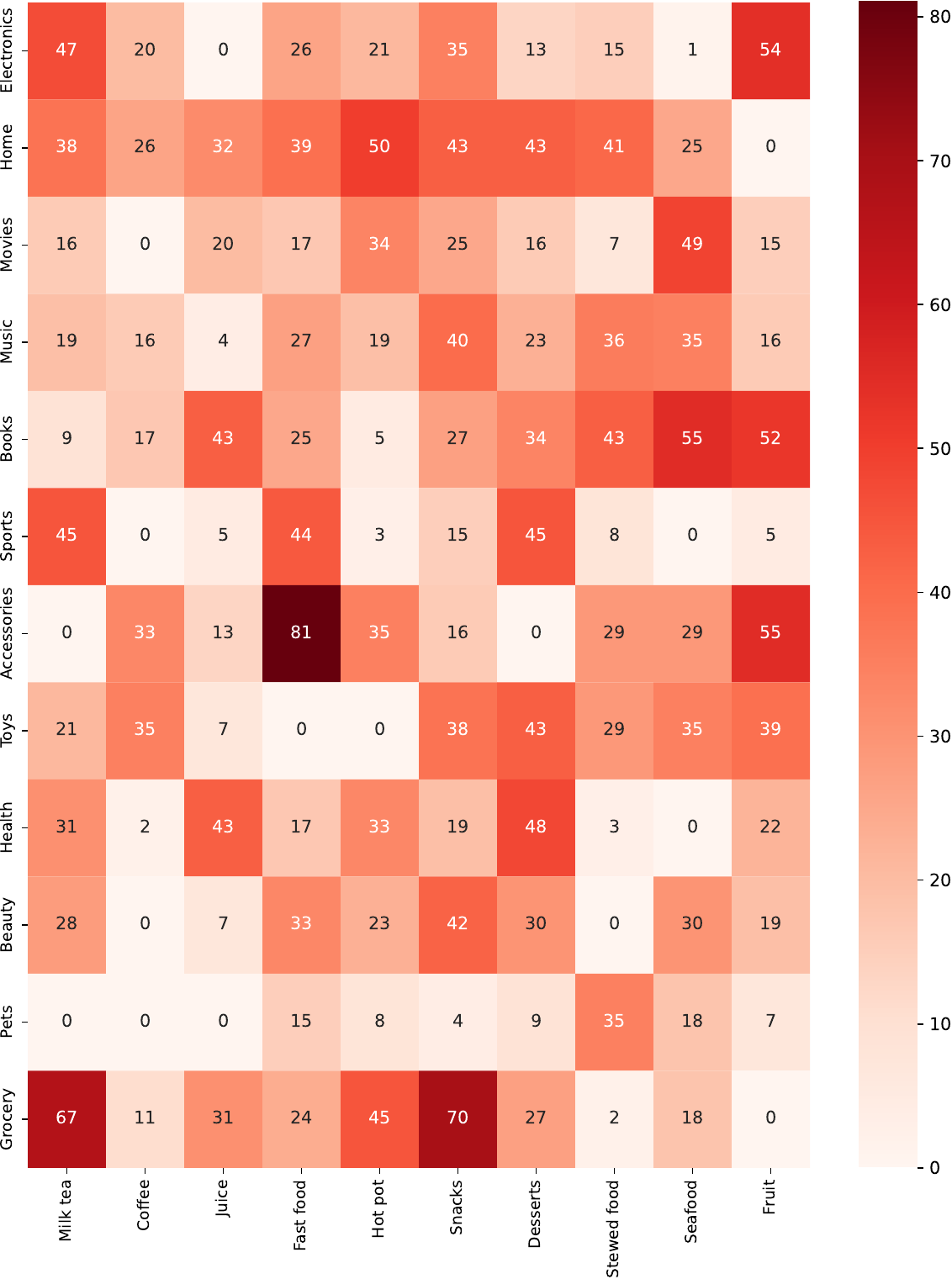}
        \caption{Diet Dataset} 
    \end{subfigure}
    \begin{subfigure}[b]{0.38\linewidth}
        \includegraphics[width=\linewidth]{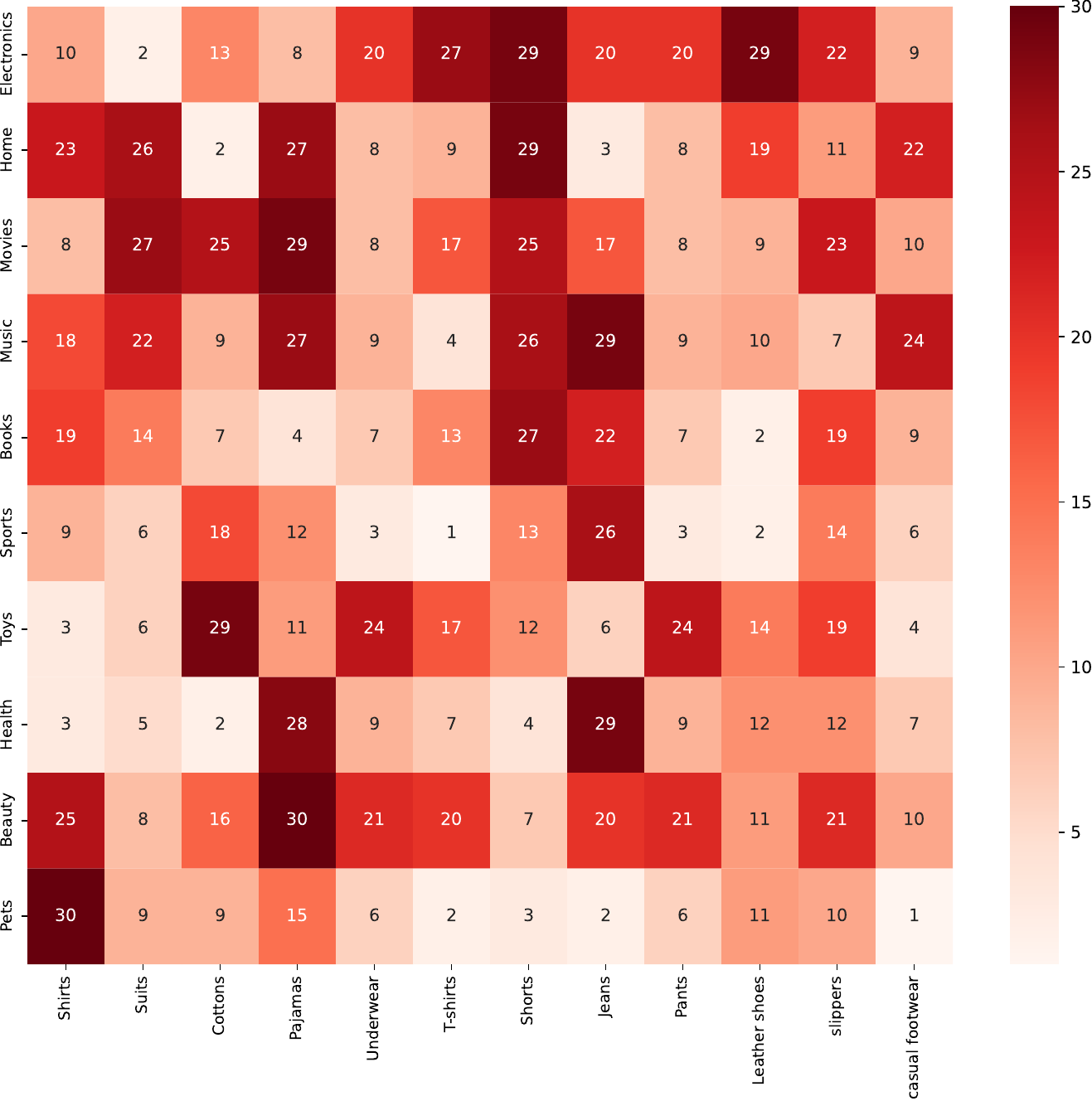}
        \caption{Apparel Dataset}
    \end{subfigure}
    \begin{subfigure}[b]{0.315\linewidth}
        \includegraphics[width=\linewidth]{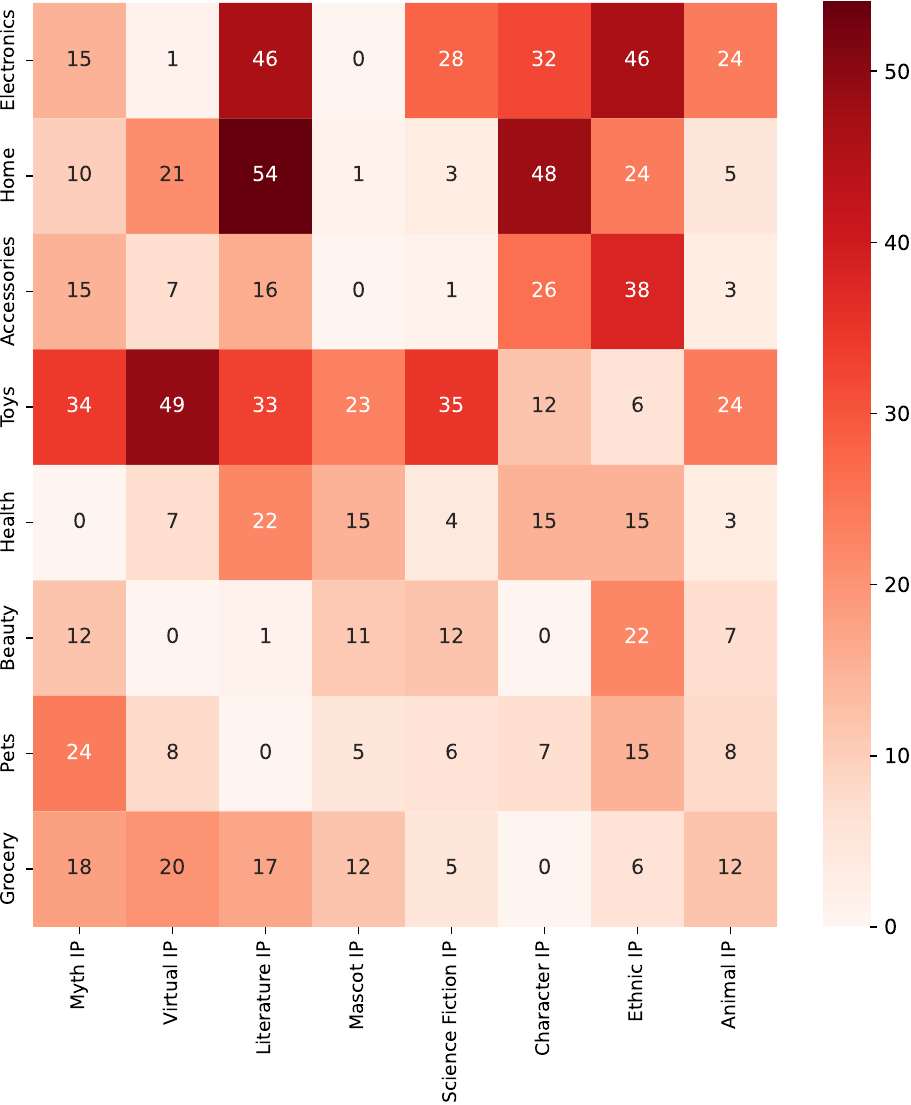}
        \caption{IP-themed Dataset} 
    \end{subfigure}
    \vspace{-0.1in}
    \caption{Visualization of co-branding relationships across Diet, Apparel, and IP-themed brands.}
    \label{fig:visu}
\end{figure*}

\subsection{Proof of Theorem \ref{thm:offline}}\label{subapp:offline}

\begin{proof} 
Consider a starting value given by a partial solution $\bb^1 \in \mathbb{N}_0^U$.
We proceed by reordering the optimal solution $\bb^*$ based on a non-increasing ordering of marginal gains. Specifically, the pair $(i^*_1, s^*_1)$ is chosen such that its marginal gain concerning the empty set is the highest among all possible pairs. Similarly, the pair $(i^*_2, s^*_2)$ is selected such that its marginal gain, evaluated concerning the solution containing only the pair $(i^*_1, s^*_1)$, is the highest among the remaining pairs, and so forth.
In formal terms, the optimal solution can be expressed as $\bb^* = \{(i^*_1, s^*_1), ..., (i^*_U, s^*_U)\}$, where each index $i^*_u \in \cU \setminus \{i^*_1, ...,i^*_{u-1}\}$ is selected to maximize the following expression:
$
\left(r_{\cG}\left(\sum_{j=1}^{u-1}s^*_j\bch_{i^*_j} + {s^*}_u\bch_{{i^*_u}} \right) -  r_{\cG}\left(\sum_{j=1}^{u-1}s^*_j\bch_{i^*_j}\right)\right).
$
Next, we aim to bound the ``virtual'' marginal gain $\Delta^i = r_{\cG}(\bb^{i+1}) - r_{\cG}(\bb^i)$. Recall that $i$ represents the first iteration where the current solution cannot be extended.
Without loss of generality, we assume the virtual pair $(u^i, s^i)$ contributes to improving $\bb^i$ towards the optimal budget allocation, i.e., $s^i+b^i_{u^i} \le b^*_{u^i}$. 
Otherwise, if $s^i+b^i_{u^i} > b^*_{u^i}$, we can safely remove $(u^i, s^i)$ from the queue $\cQ$ without affecting the greedy solution, the optimal solution, or the analysis.

Starting from the initial solution $\bb^1$ such that $\bb^1$ matches $(i^*_1, i^*_1),$ $ (i^*_2, s^*_2)$, and $ (i^*_3, s^*_3)$, i.e., ${\bb^1}_{i^*_u} = s^*_i$ for $i=1,2,3$, we aim to bound $\Delta^i = r_{\cG}(\bb^{i+1}) - r_{\cG}(\bb^i) \le r_{\cG}(\bb^{i} \vee b^*_{u^i}\bch_{u^i}) - r_{\cG}(\bb^i) \le r_{\cG}(b^*_{u^i}\bch_{u^i}) - r_{\cG}(\bm{0}) \le r_{\cG}({s^*_1}\bch_{i^*_1})$.
Using marginal gain properties, we have: $\Delta^i = r_{\cG}(\bb^{i+1}) - r_{\cG}(\bb^i) \le r_{\cG}(\bb^{i} \vee b^*_{u^i}\bch_{u^i}) - r_{\cG}(\bb^i) = r_{\cG}(\bb^{i} \vee s^*_1\bch_{i^*_1} \vee b^*_{u^i}\bch_{u^i}) - r_{\cG}(\bb^i \vee s^*_1\bch_{i^*_1}) \le r_{\cG}( s^*_1\bch_{i^*_1} \vee b^*_{u^i}\bch_{u^i}) - r_{\cG}( s^*_1\bch_{i^*_1}) \le r_{\cG}( s^*_1\bch_{i^*_1} +  s^*_2\bch_{i^*_2}) - r_{\cG}( s^*_1\bch_{i^*_1}).$
By repeating the above reasoning, we also have $\Delta^i \le r_{\cG}( s^*_1\bch_{i^*_1} +  s^*_2\bch_{i^*_2} +  s^*_3\bch_{i^*_3}) - r_{\cG}( s^*_1\bch_{i^*_1} +  s^*_2\bch_{i^*_2}).$
Adding these inequalities together yields: $\Delta^i \le r_{\cG}(\bb^1)/3$.
Now, applying Lemma 9 from \cite{liu2021multi} with $l = i$ and noting that $\sum_{j=1}^u{s^j} \ge B$, we establish: $r_{\cG}(\bb^{i+1}) \ge r_{\cG}(\bb^1) + (1-1/e)(r_{\cG}(
\bb^*) - r_{\cG}(\bb^1))$.
Combining this with $\Delta^i \le r_{\cG}(\bb^1)/3$, we derive: $r_{\cG}(\bb^{i}) \ge (1-1/3)r_{\cG}(\bb^1) + (1-1/e)(r_{\cG}(
\bb^*) - r_{\cG}(\bb^1)) \ge (1-1/e)r_{\cG}(\bs^*)$. Note that if $K < 3$ (i.e., $1/K > 1/3$), the inequality does not hold, and the $1 - 1/e$ approximation ratio cannot be achieved. Therefore, we set $K = 3$ to ensure the approximate ratio while minimizing the time complexity as much as possible.
\end{proof}

\subsection{Proof of Lemma~\ref{lem: lattice submodular}}\label{subapp:proof_lem}
\begin{proof}

\textbf{``If'' Direction.}
We assume that \( f(\boldsymbol{x} + \boldsymbol{\chi}_i) - f(\boldsymbol{x}) \geq f(\boldsymbol{x} + \boldsymbol{\chi}_j + \boldsymbol{\chi}_i) - f(\boldsymbol{x} + \boldsymbol{\chi}_j) \) holds for all \(\boldsymbol{x} \in \mathbb{N}_0^U\) and distinct \(i, j \in \cU\). Our goal is to prove that \( f(\boldsymbol{x} + \boldsymbol{\chi}_i) - f(\boldsymbol{x}) \geq f(\boldsymbol{y} + \boldsymbol{\chi}_i) - f(\boldsymbol{y}) \) for any \(\boldsymbol{x} \leq \boldsymbol{y}\) (i.e., \(x_i \leq y_i\) for all \(i \in \cU\)) and \(i \in \cU\) such that \(x_i = y_i\).

Define \( I_0 = \{ i \in \cU : x_i = y_i \} \) and \( I_1 = \{ i \in \cU : x_i < y_i \} \). Since \(\boldsymbol{x} \leq \boldsymbol{y}\), we can write \(\boldsymbol{y} = \boldsymbol{x} + \sum_{j=1}^s \alpha_j \boldsymbol{\chi}_{i_j}\), where \(I_1 = \{ i_1, \dots, i_s \}\) and \(\alpha_j = y_{i_j} - x_{i_j} \geq 1\). For any \(i \in I_0\), we start with \( f(\boldsymbol{x} + \boldsymbol{\chi}_i) - f(\boldsymbol{x}) \). Using the assumption with \(j = i_1\), we have \( f(\boldsymbol{x} + \boldsymbol{\chi}_i) - f(\boldsymbol{x}) \geq f(\boldsymbol{x} + \boldsymbol{\chi}_{i_1} + \boldsymbol{\chi}_i) - f(\boldsymbol{x} + \boldsymbol{\chi}_{i_1}) \). Applying the assumption again to \(\boldsymbol{x} + \boldsymbol{\chi}_{i_1}\), we get \( f(\boldsymbol{x} + \boldsymbol{\chi}_{i_1} + \boldsymbol{\chi}_i) - f(\boldsymbol{x} + \boldsymbol{\chi}_{i_1}) \geq f(\boldsymbol{x} + 2\boldsymbol{\chi}_{i_1} + \boldsymbol{\chi}_i) - f(\boldsymbol{x} + 2\boldsymbol{\chi}_{i_1}) \). We continue this process incrementally, adding \(\alpha_1 \boldsymbol{\chi}_{i_1}\), then \(\alpha_2 \boldsymbol{\chi}_{i_2}\), and so on, up to \(\alpha_s \boldsymbol{\chi}_{i_s}\), arriving at \( f(\boldsymbol{x} + \sum_{j=1}^s \alpha_j \boldsymbol{\chi}_{i_j} + \boldsymbol{\chi}_i) - f(\boldsymbol{x} + \sum_{j=1}^s \alpha_j \boldsymbol{\chi}_{i_j}) = f(\boldsymbol{y} + \boldsymbol{\chi}_i) - f(\boldsymbol{y}) \). Thus, the chain of inequalities gives \( f(\boldsymbol{x} + \boldsymbol{\chi}_i) - f(\boldsymbol{x}) \geq f(\boldsymbol{y} + \boldsymbol{\chi}_i) - f(\boldsymbol{y}) \).
Next, for \(i \in I_0\) and any \(a \in \mathbb{N}_0^U\), we express \( f(\boldsymbol{x} + a \boldsymbol{\chi}_i) - f(\boldsymbol{x}) = \sum_{k=1}^a \left[ f(\boldsymbol{x} + k \boldsymbol{\chi}_i) - f(\boldsymbol{x} + (k-1) \boldsymbol{\chi}_i) \right] \) and similarly \( f(\boldsymbol{y} + a \boldsymbol{\chi}_i) - f(\boldsymbol{y}) = \sum_{k=1}^a \left[ f(\boldsymbol{y} + k \boldsymbol{\chi}_i) - f(\boldsymbol{y} + (k-1) \boldsymbol{\chi}_i) \right] \). For each \(k\), since \(\boldsymbol{x} + (k-1) \boldsymbol{\chi}_i \leq \boldsymbol{y} + (k-1) \boldsymbol{\chi}_i\) with equal \(i\)-th coordinates, the previous result implies \( f(\boldsymbol{x} + k \boldsymbol{\chi}_i) - f(\boldsymbol{x} + (k-1) \boldsymbol{\chi}_i) \geq f(\boldsymbol{y} + k \boldsymbol{\chi}_i) - f(\boldsymbol{y} + (k-1) \boldsymbol{\chi}_i) \). Summing over \(k\) from 1 to \(a\), we obtain \( f(\boldsymbol{x} + a \boldsymbol{\chi}_i) - f(\boldsymbol{x}) \geq f(\boldsymbol{y} + a \boldsymbol{\chi}_i) - f(\boldsymbol{y}) \). By applying the submodularity equivalence from Lemma 2.2 in \cite{soma2014optimal}, submodularity follows from the diminishing returns property, thus establishing the "if" direction.

\textbf{``Only If'' Direction.}
Assume \(f\) is submodular, meaning \( f(\boldsymbol{a}) + f(\boldsymbol{b}) \geq f(\boldsymbol{a} \vee \boldsymbol{b}) + f(\boldsymbol{a} \wedge \boldsymbol{b}) \) for all \(\boldsymbol{a}, \boldsymbol{b} \in \mathbb{N}_{0}^U\). Let \(\boldsymbol{a} = \boldsymbol{x} + \boldsymbol{\chi}_i\) and \(\boldsymbol{b} = \boldsymbol{x} + \boldsymbol{\chi}_j\) for distinct \(i, j \in \cU\). Then, \(\boldsymbol{a} \vee \boldsymbol{b} = \boldsymbol{x} + \boldsymbol{\chi}_i + \boldsymbol{\chi}_j\) and \(\boldsymbol{a} \wedge \boldsymbol{b} = \boldsymbol{x}\). By submodularity, \( f(\boldsymbol{x} + \boldsymbol{\chi}_i) + f(\boldsymbol{x} + \boldsymbol{\chi}_j) \geq f(\boldsymbol{x} + \boldsymbol{\chi}_i + \boldsymbol{\chi}_j) + f(\boldsymbol{x}) \), which rearranges to \( f(\boldsymbol{x} + \boldsymbol{\chi}_i) - f(\boldsymbol{x}) \geq f(\boldsymbol{x} + \boldsymbol{\chi}_i + \boldsymbol{\chi}_j) - f(\boldsymbol{x} + \boldsymbol{\chi}_j) \), as required.
\end{proof}

\section{More Experiment Details}\label{app:experiment}

The curated datasets, categorized into three major groups—Diet, Apparel, and IP-themed brands—capture co-branding relationships among brand pairs. \Cref{fig:visu} visualizes these relationships by quantifying the number of co-branding occurrences, providing insights into the feasibility and frequency of such collaborations.

\end{document}